\documentclass{article}

\usepackage{microtype}
\usepackage{graphicx}
\usepackage{subcaption}
\usepackage{booktabs} 

\newcommand{\Pdistr}{\mathbb{P}}
\newcommand{\V}{\mathbf{V}}
\usepackage{amsmath,amsfonts,bm}

\def\eqref#1{equation~\ref{#1}}
\def\Eqref#1{Equation~\ref{#1}}

\def\1{\bm{1}}

\def\rvg{{\mathbf{g}}}

\DeclareMathAlphabet{\mathsfit}{\encodingdefault}{\sfdefault}{m}{sl}
\SetMathAlphabet{\mathsfit}{bold}{\encodingdefault}{\sfdefault}{bx}{n}

\usepackage{url}
\usepackage{wrapfig}
\usepackage{xcolor}
\usepackage{mathtools}
\usepackage[page]{appendix}
\usepackage{booktabs}
\usepackage{tabularx}
\usepackage{tcolorbox}   
\usepackage{amsmath,amssymb,amsfonts}
\usepackage{subcaption}
\usepackage{enumitem}
\usepackage{tikz}
\usepackage{lipsum}
\usepackage{graphicx,scalerel}
\usepackage{amsthm}
\usepackage{thmtools,thm-restate}

\usepackage{multicol}
\usepackage[vlined,ruled]{algorithm2e}

\usepackage[accepted]{icml2026}

\usepackage{amsmath}
\usepackage{amssymb}
\usepackage{mathtools}
\usepackage{amsthm}

\theoremstyle{plain}
\SetKwInOut{Input}{Input}
\SetKwInOut{Output}{Output}
\SetKwInOut{Return}{Return}
\SetKwFor{ForEach}{ForEach}{Do}{}
\SetKwFor{While}{While}{Do}{}
\SetKwIF{If}{ElseIf}{Else}{If}{Then}{Else If}{Else}{}
\DontPrintSemicolon

\usepackage{minitoc}

\definecolor{mydarkblue}{rgb}{0,0.08,0.45}
\usepackage[colorlinks=true,
            linkcolor=mydarkblue,
            citecolor=mydarkblue,
            filecolor=mydarkblue,
            urlcolor=mydarkblue]{hyperref}
\hypersetup{
    colorlinks,
    linkcolor={red!50!black},
    citecolor={blue!50!black},
    urlcolor={blue!80!black}
}

\tcbset{
    colback=gray!10!white,
    colframe=gray!10!white,
    boxrule=0mm,
    sharp corners,
    width=\textwidth,
    before=\vspace{0mm},
    after=\vspace{0mm},
    boxsep=1mm,
    left=0.5mm,
    right=0.5mm,
    top=1mm,
     bottom=1mm
}

\newtheorem{assumption}{Assumption}
\newtheorem{definition}{Definition}
\newtheorem{lemma}{Lemma}

\newtheorem{remark}{Remark}

\newcommand*{\G}{\mathbf{G}}

\newcommand\indep{\protect\mathpalette{\protect\independenT}{\perp}}
\def\independenT#1#2{\mathrel{\rlap{$#1#2$}\mkern2mu{#1#2}}}

\newcommand\notindep{\!\perp\!\!\!\!\not\perp\!}

\usepackage[textsize=tiny]{todonotes}

\icmltitlerunning{From Generalist to Specialist Representation}

\begin{document}

\doparttoc 
\faketableofcontents 

\twocolumn[
  \icmltitle{From Generalist to Specialist Representation}



  \icmlsetsymbol{equal}{*}

  \begin{icmlauthorlist}
    \icmlauthor{Yujia Zheng}{equal,c,u}
    \icmlauthor{Fan Feng}{equal,s,m}
    \icmlauthor{Yuke Li}{uom}
    \icmlauthor{Shaoan Xie}{c}
    \icmlauthor{Kevin Murphy}{g}
    \icmlauthor{Kun Zhang}{c,m}
  \end{icmlauthorlist}

  \icmlaffiliation{c}{CMU}
  \icmlaffiliation{u}{UIUC}
  \icmlaffiliation{uom}{UMD}
  \icmlaffiliation{s}{UCSD}
  \icmlaffiliation{g}{UBC}
  \icmlaffiliation{m}{MBZUAI}

  \icmlcorrespondingauthor{Yujia Zheng}{yujiazh@cmu.edu}

  \icmlkeywords{Machine Learning, ICML}

  \vskip 0.3in
]



\printAffiliationsAndNotice{}  

\begin{abstract}


Given a generalist model, learning a task-relevant specialist representation is fundamental for downstream applications. Identifiability, the asymptotic guarantee of recovering the ground-truth representation, is critical because it sets the ultimate limit of any model, even with infinite data and computation. We study this problem in a completely nonparametric setting, without relying on interventions, parametric forms, or structural constraints. We first prove that the structure between time steps and tasks is identifiable in a fully unsupervised manner, even when sequences lack strict temporal dependence and may exhibit disconnections, and task assignments can follow arbitrarily complex and interleaving structures. We then prove that, within each time step, the task-relevant latent representation can be disentangled from the irrelevant part under a simple sparsity regularization, without any additional information or parametric constraints. Together, these results establish a hierarchical foundation: task structure is identifiable across time steps, and task-relevant latent representations are identifiable within each step. To our knowledge, each result provides a first general nonparametric identifiability guarantee, and together they mark a step toward provably moving from generalist to specialist models.
\end{abstract}
\section{Introduction}

Learning latent representations from high-dimensional observations is central to enabling machines to understand and act in the world \citep{bengio2013representation, scholkopf2021toward}. World models, for instance, compress raw sensory input into low-dimensional features that capture dynamics \citep{ha2018world}. Rather than modeling the entire environment, task-relevant representations are desirable because they retain only the information required for the task, providing both efficiency and robustness \citep{tishby2015deep, wong2025modeling}. For instance, in autonomous driving, planning depends on the positions and velocities of nearby vehicles and pedestrians, not on the color of the cars or billboards along the road.

Without identifiability, a learned representation cannot be guaranteed to reflect the ground truth, even with infinite data and computation. This challenge has long been central to latent representation learning, extending beyond task-relevant settings \citep{hyvarinen1999nonlinear, locatello2019challenging}. Given two observationally equivalent models $\mathbf{o}=f(\mathbf{s})$ and $\mathbf{o}=\hat{f} (\hat {\mathbf{s}})$, an arbitrary transformation $\phi$ may exist such that $\hat {\mathbf{s}}=\phi(\mathbf{s})$. In this case, the recovered latents need not correspond in any meaningful way to the true ones. Task-relevant variables, for example, may remain entangled with irrelevant factors, making it impossible to isolate what actually matters for the task. Such ambiguity introduces irreducible uncertainty into a machine’s internal model of the world, constraining the ceiling of achievable intelligence and creating risks in high-stakes applications.

Existing theory provides conditions for identifiability of latent representations. In classical linear settings, identifiability can be obtained under additional parametric assumptions, for example in factor models with constraints on loadings \citep{anderson1956statistical, joreskog1969general, shapiro1985identifiability}, in linear Independent Component Analysis (ICA) via non-Gaussianity \citep{comon1994independent, hyvarinen2001independent}, and in tensor or multi-view models via Kruskal-type rank conditions \citep{kruskal1977three, sidiropoulos2000uniqueness, allman2009identifiability}. More recently, nonlinear theory has advanced along two routes. In nonlinear ICA, one line leverages auxiliary information across domains or time \citep{hyvarinen2016unsupervised,hyvarinen2019nonlinear,yao2021learning,halva2021disentangling,lachapelle2021disentanglement}, while another constrains the mixing class \citep{taleb1999source,moran2021identifiable,kivva2022identifiability,zhengidentifiability,gresele2021independent,buchholz2022function}. In causal representation learning, identifiability is often derived from interventional data \citep{von2023nonparametric,jiang2023learning,jin2023learning,zhang2024causal, varici2025score} or counterfactual views \citep{von2021self,brehmer2022weakly}, which require some control over the data-generating process. Recent work considers the general setting without extra information, with the assumptions that both latent and observed variables are Boolean vectors \citep{zhang2025neural}. These conditions provide significant insights into recovering the underlying generative process, but may overly restrict the range of applicable scenarios. 

At the same time, most existing theoretical results focus on full identifiability of the latent system: either recovering all latent variables component-wisely, or identifying them up to ancestors or neighborhoods. Yet such comprehensive recovery is often unnecessary. In many applications, tasks depend only on a subset of latent factors -- for instance, in robotic manipulation, success hinges on object pose and gripper position, while lighting and textures are irrelevant. Shifting the goal from full-system identifiability to task-relevant identifiability enables weaker assumptions while still directly supporting planning, transfer, and generalization. Recent works have explored subspace factorization \citep{von2021self,kong2022partial,li2023subspace,liu2023learning}, aiming to decompose latent factors into interpretable blocks. However, these approaches impose fixed structures, such as content–style separation, and are not designed to accommodate flexible task settings, where latent variables may correspond to tasks with unknown number, structure, and assignment, and where this uncertainty can further vary across time steps. Thus, the question remains:
\vspace{-0.1em}
\begin{center}
\textit{Is a task-relevant world representation identifiable in the general setting?}
\end{center}
\vspace{-0.3em}
\paragraph{Contributions.} To answer this, we develop a theoretical framework for identifying task-relevant representations from the complex dynamics of the observational world. Our first result proves that task structure across time is identifiable in a fully general setting, without any parametric or structural assumptions (\textit{Section \ref{sec:temporal-structure}}). We do not require strict temporal dependence: steps may be disconnected or even i.i.d., and thus we cannot leverage the temporal information. In addition, tasks may appear, disappear, and reappear in arbitrary order, allowing interleaving task-time structures. After identifying the tasks for each time step, we further ask which latent variables are relevant to those tasks, and provide the first nonparametric identifiability result for task-relevant latent representations without relying on interventions or functional constraints (\textit{Section \ref{sec:instant-structure}}). Specifically, we show that fine-tuning a pretrained model with a simple task-latent regularization provably disentangles task-relevant variables from irrelevant ones. Together, these results mark a step towards establishing principled pathways from generalist to specialist models that achieve both compression and fidelity.

\vspace{-0.5em}
\section{Preliminaries}\label{sec:prelim}
\vspace{-0.5em}

\begin{figure}
    \centering
    \includegraphics[width=0.95\linewidth]{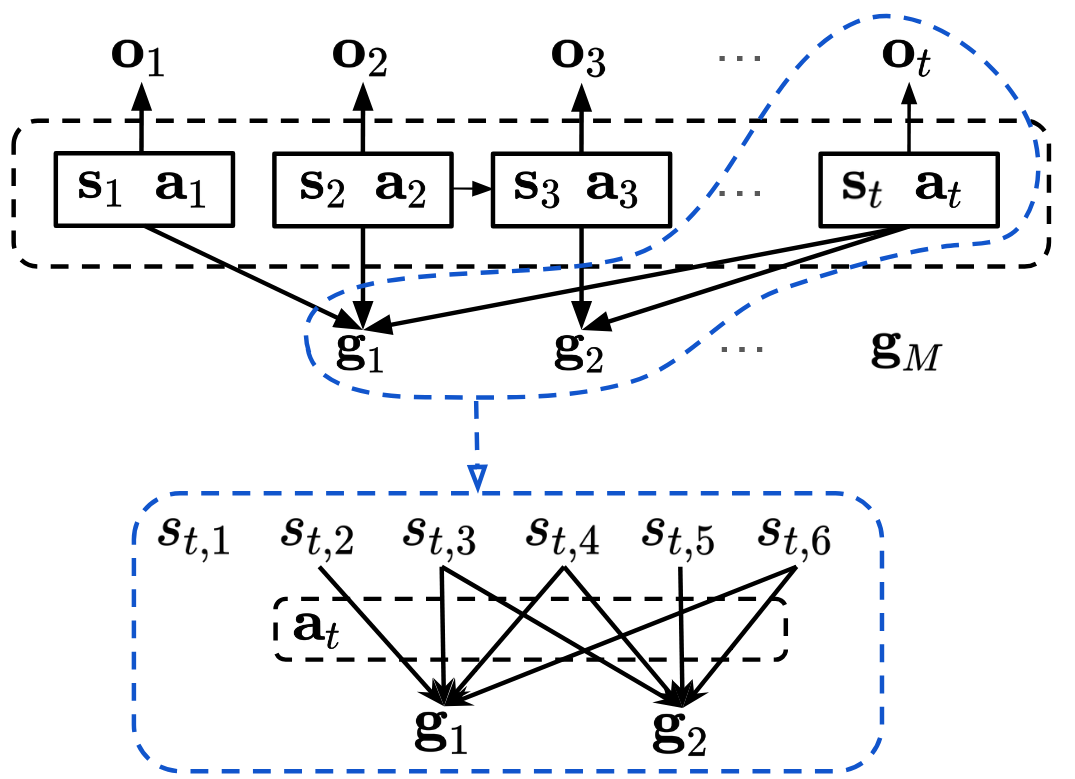}
    \caption{An illustration of the generative process. Latent states $\mathbf{s}_t$ generate observations $\mathbf{o}_t$ via nonlinear functions and interact with actions $\mathbf{a}_t$ under varying temporal connectivity, where consecutive steps may be arbitrarily disconnected. Tasks $\mathbf{g}_i$ are defined as colliders across time steps, and different tasks can arbitrarily interleave with one another. The zoomed-in view (right) shows how different components of $\mathbf{s}_t$ connect to multiple tasks via the intermediate actions.}
    \label{fig:dgp}
    \vspace{-0.5em}
\end{figure}

We assume an observed sequence $\{\mathbf{o}_t\}_{t=1}^T$ generated by latent states $\{\mathbf{s}_t\}_{t=1}^T$, with $\mathbf{o}_t\in\mathbb{R}^{d_o}$, $\mathbf{s}_t\in\mathbb{R}^{d_s}$, and actions $\mathbf{a}_t\in\mathbb{R}^{d_a}$. Observations satisfy
\begin{equation}\label{eq:dgp}
\mathbf{o}_t=f_t(\mathbf{s}_t),
\end{equation}
where $f_t$ is a diffeomorphism onto its image. The generative function $f_t$ is hidden and completely unknown. We allow varying temporal connectivity: $\mathbf{s}_t\to\mathbf{a}_t$ for all $t$, and $\mathbf{a}_t\to\mathbf{s}_{t+1}$, $\mathbf{s}_t\to\mathbf{s}_{t+1}$ whenever the boundary $t\to t{+}1$ is connected; both edges into $\mathbf{s}_{t+1}$ are omitted when it is disconnected. A Structural Causal Model (SCM) consistent with these is defined as $\mathbf{a}_t=\pi_t(\mathbf{s}_t,\eta_t)$, where
\begin{equation}
\mathbf{s}_{t+1}=
\begin{cases}
F_t(\mathbf{s}_t,\mathbf{a}_t,\xi_t), & \text{if } t\to t{+}1 \text{ is connected},\\
F_t^{0}(\xi_t), & \text{otherwise},
\end{cases}
\end{equation}
with independent noises $\eta_t,\xi_t$. We define tasks $\{\mathbf{g}_i\}_{i=1}^M$ as colliders among different time steps, that is, $\mathbf{s}_t \to \mathbf{a}_t\to \mathbf{g}_i$ if the time step $t$ is relevant to $\mathbf{g}_i$. The visualization of the process is in Figure \ref{fig:dgp}, and the reasons to define tasks as colliders instead of others are as follows:

\begin{remark}[\textbf{Why are tasks colliders?}]
Modeling a shared task $\mathbf{g}_i$ as a collider is essential for capturing the coordinated nature of actions within a plan.
\vspace{-0.2em}
\begin{itemize}
    \item \textbf{Confounder/Mediator:} The structures $\mathbf{a}_{t_1} \leftarrow \mathbf{g}_i \rightarrow \mathbf{a}_{t_2}$ or $\mathbf{a}_{t_1} \rightarrow \mathbf{g}_i \rightarrow \mathbf{a}_{t_2}$ would imply conditional independence: $\{\mathbf{s}_{t_1}, \mathbf{a}_{t_1}\} \indep \{\mathbf{s}_{t_2}, \mathbf{a}_{t_2}\} \mid \mathbf{g}_i$. This is unrealistic as it treats steps within a task as isolated events rather than parts of a coherent strategy.
    \vspace{-0.2em}
    \item \textbf{Collider:} The structure $\mathbf{a}_{t_1} \rightarrow \mathbf{g}_i \leftarrow \mathbf{a}_{t_2}$ correctly induces conditional dependence: $\{\mathbf{s}_{t_1}, \mathbf{a}_{t_1}\} \notindep \{\mathbf{s}_{t_2}, \mathbf{a}_{t_2}\} \mid \mathbf{g}_i$. This captures the intuition that time steps within a task are interdependent, since they all target the same task.
\end{itemize}
\vspace{-0.2em}
\end{remark}

Given the observed variables $\{\mathbf{o}_t\}_{t=1}^T$ and the global set of tasks $\{\mathbf{g}_i\}_{i=1}^M$, our goal is first to identify the structure linking time steps and tasks (Section~\ref{sec:temporal-structure}), and then, within each latent state $\mathbf{s}_t$, to isolate the components relevant to the associated tasks (Section~\ref{sec:instant-structure}). All theoretical guarantees need to be achieved in the general nonparametric setting without additional information.

\section{Learning Temporal Task Structure}\label{sec:temporal-structure}

We first establish the identifiability of the time-task structure in the general setting. This structure is essential, as it forms the foundation for recovering task-relevant latent representations within each step. Without knowing which tasks are active at which times, disentangling latent variables at the step level would be ill-posed. Providing formal guarantee in the most general scenario is challenging, mainly due to the following reasons:
\vspace{-0.3em}
\begin{itemize}
    \item The hidden process is fully nonparametric, with no auxiliary information or distributional constraints.
    \item Tasks may interleave arbitrarily over time, while classical decomposition assumes sequential completion.
    \item Temporal dependence is not guaranteed; the sequence may contain arbitrary disconnected boundaries.
\end{itemize}
\vspace{-0.3em}
Despite these challenges, we prove that the structure between time steps and tasks is identifiable under standard conditions. This result forms the first pillar of our framework: a principled characterization of temporal task structure in the general regime without additional information.

\vspace{-0.3em}
\subsection{Characterization of Pair-wise Structure}
\vspace{-0.3em}

We assume $T$ time steps, partitioned into $N$ contiguous segments of equal length $L=T/N$, with $L\ge 2$ and $N\mid T$. Let us define that
\begin{equation}
\mathbf{S}=\{\mathbf{S}_{1},\dots,\mathbf{S}_{N}\},\qquad
\mathbf{S}_{i}=\{\mathbf{s}_{(i-1)L+1},\dots,\mathbf{s}_{iL}\}.
\end{equation}
All states within a segment share the same set of active tasks, and each task $\mathbf{g}_i$ must appear in at least two segments. Segments can be short (as few as two steps), ensuring flexibility in capturing state changes. To formalize the conditions used in our theory, we introduce the following notion.

\begin{definition}[Band conditioning set]
For $k<v$ and task $\mathbf{g}_i$, define
\begin{equation*}
\begin{aligned}
    \mathbf{Z}_{\mathrm{band}}(k,v,i)=&\{\mathbf{s}_{kL-1},\mathbf{s}_{kL+1},\mathbf{s}_{vL-1},\mathbf{s}_{vL+1}\}\\
    &\cap\{\mathbf{s}_1,\dots,\mathbf{s}_T\} \cup \{\mathbf{g}_i\},
\end{aligned}
\end{equation*}
with out-of-range indices omitted.
\end{definition}

\paragraph{Why Temporal Segmentation.}
It might be worth noting that our segmentation is not about having prior knowledge of how a sequence should be divided, such as knowing in advance that a video naturally breaks into distinct semantic periods and that failing to know that will lead to a segmentation error. Instead, the purpose is simply to ensure that our tasks are well defined. The only requirement is that each segment contains more than one time step, which represents the minimal granularity needed to preserve temporal coherence. Theoretically, we could always set the segment length to minimal to capture the finests granularity of changes. In practice, one can always view the sequence as a collection of two-step segments without relying on any semantic understanding of the underlying process. With granularity this fine, segmentation has negligible effect on the tests.

Our main result is the following, with the standard Markov and Faithfulness conditions.

\begin{assumption}[Markov and Faithfulness \citep{spirtes2000causation}]
\label{assump:markov}
Let $\G$ be a Directed Acyclic Graph (DAG) and $\Pdistr$ a distribution over variables $\V$.  
The Markov property requires that each $X \in \V$ is independent of its non-descendants given its parents in $\G$.  
The Faithfulness requires that $\Pdistr$ entails no conditional independence relations beyond those implied by the Markov property of $\G$.
\end{assumption}

\begin{figure}
    \includegraphics[width=0.98\linewidth]{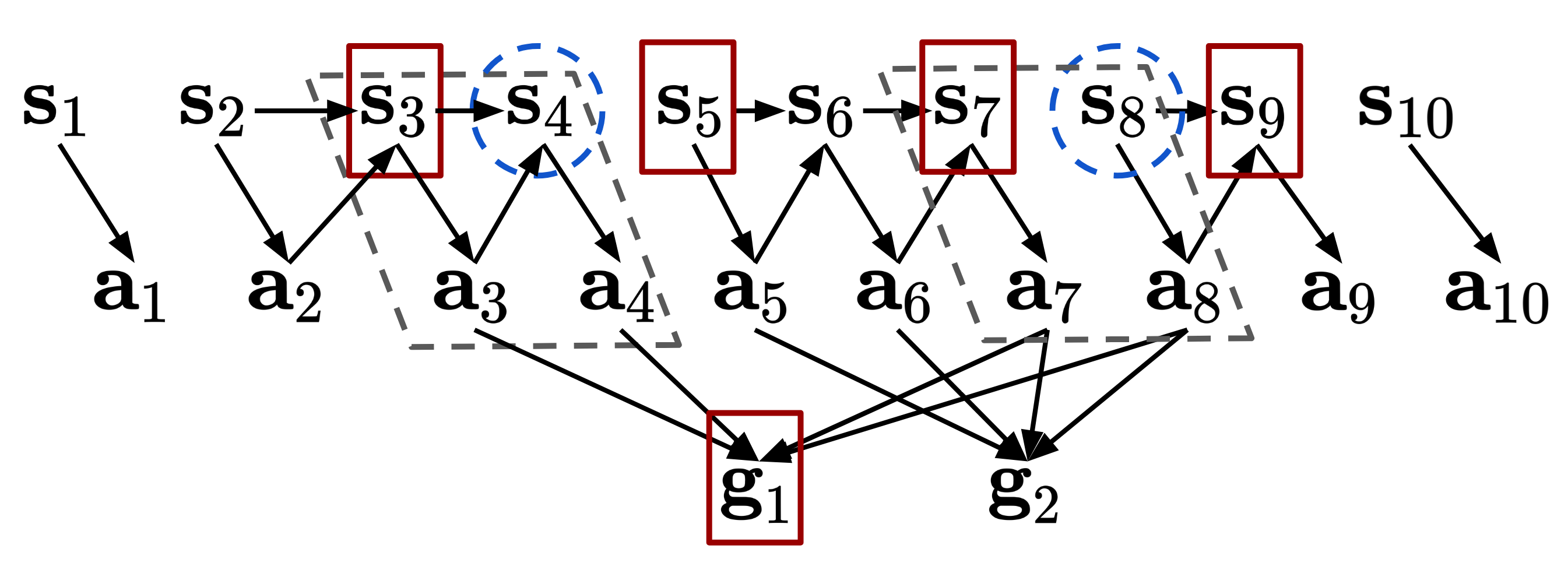}
    \caption{A quick example for Theorem~\ref{thm:temporal}. Note that the observed variables in $\mathbf{o}_t$ have been omitted for brevity. We test whether $\mathbf{S}_{k}=\{\mathbf{s}_{3},\mathbf{s}_{4}\}$ and $\mathbf{S}_{v}=\{\mathbf{s}_{7},\mathbf{s}_{8}\}$ belong to task $\mathbf{g}_{1}$ by checking the conditional dependence $\mathbf{s}_{4}\notindep \mathbf{s}_{8}\mid \mathbf{Z}_{\mathrm{band}}(k,v,1)$, where $\mathbf{Z}_{\mathrm{band}}(k,v,1)=\{\mathbf{s}_{3},\mathbf{s}_{5},\mathbf{s}_{7},\mathbf{s}_{9},\mathbf{g}_{1}\}$. Since $\mathbf{s}_{4}$ and $\mathbf{s}_{8}$ are conditionally dependent given $\mathbf{Z}_{\mathrm{band}}(k,v,1)$, $\mathbf{g}_1$ is identified as (one of) the underlying tasks. Note that our theory accommodates arbitrary disconnections between time steps (e.g., $\mathbf{s}_1$ and $\mathbf{s}_2$), multiple tasks, and arbitrarily interleaving task structures.}
    \label{fig:example}
    \vspace{-0.3em}
\end{figure}

\begin{restatable}{theorem}{temporal}\label{thm:temporal}
Assume the Markov property and Faithfulness with respect to the graph above, and $L\ge 2$. Fix $k<v$ and a task $\mathbf{g}_{i}$. Then $\mathbf{g}_{i}$ is relevant to segments $\mathbf{S}_{k}$ and $\mathbf{S}_{v}$ if and only if
\begin{equation*}
\mathbf{s}_{kL}\notindep \mathbf{s}_{vL}\ \Big|\ \mathbf{Z}_{\mathrm{band}}(k,v,i).
\end{equation*}
\end{restatable}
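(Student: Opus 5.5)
The plan is to translate the statement into d-separation and then use the Markov property for one direction and Faithfulness for the other. By Markov and Faithfulness (Assumption~\ref{assump:markov}), $\mathbf{s}_{kL}\notindep \mathbf{s}_{vL}\mid \mathbf{Z}_{\mathrm{band}}(k,v,i)$ holds if and only if $\mathbf{s}_{kL}$ and $\mathbf{s}_{vL}$ are d-connected given $\mathbf{Z}:=\mathbf{Z}_{\mathrm{band}}(k,v,i)$ in the DAG of Section~\ref{sec:prelim}. So it suffices to show that d-connection holds exactly when $\mathbf{g}_i$ has parents $\mathbf{a}_t$ with $t\in\mathbf{S}_k$ and with $t\in\mathbf{S}_v$. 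Throughout I would work in the full DAG over states, actions, observations and tasks, treating the observations $\mathbf{o}_t$ as leaves that are never conditioned on.

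\textbf{($\Rightarrow$ direction, via Markov.)} Suppose $\mathbf{g}_i$ is relevant to both segments. Then $\mathbf{s}_{kL}\to\mathbf{a}_{kL}\to\mathbf{g}_i\leftarrow\mathbf{a}_{vL}\leftarrow\mathbf{s}_{vL}$ is a path, because every state in a segment shares the segment's active tasks, so in particular $\mathbf{a}_{kL}\to\mathbf{g}_i$ and $\mathbf{a}_{vL}\to\mathbf{g}_i$. On this path the only collider is $\mathbf{g}_i\in\mathbf{Z}$, and the non-colliders $\mathbf{a}_{kL},\mathbf{a}_{vL}$ are not in $\mathbf{Z}$, since $\mathbf{Z}$ contains no actions. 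The path is therefore active, and Faithfulness gives the dependence. This direction is immediate.

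\textbf{($\Leftarrow$ direction, via Markov.)} Suppose $\mathbf{g}_i$ is not relevant to at least one of the two segments. I must show that every path between $\mathbf{s}_{kL}$ and $\mathbf{s}_{vL}$ is blocked by $\mathbf{Z}$; Markov then gives independence. I would classify paths by their first edge out of $\mathbf{s}_{kL}$. The neighbours of $\mathbf{s}_{kL}$ are $\mathbf{s}_{kL-1}$, $\mathbf{a}_{kL-1}$, $\mathbf{s}_{kL+1}$, $\mathbf{a}_{kL}$ and $\mathbf{o}_{kL}$, where the temporal edges appear only if the corresponding boundary is connected.
\begin{itemize}
\item A path through $\mathbf{s}_{kL\pm1}$ meets that node as a non-collider. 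The one exception is the configuration $\mathbf{s}_{kL}\to\mathbf{a}_{kL}\to\mathbf{s}_{kL+1}\leftarrow\mathbf{s}_{kL}$, but this does not start at $\mathbf{s}_{kL}$ going into $\mathbf{s}_{kL+1}$ as a collider from both sides in a way that opens anything new. Such a path is blocked because $\mathbf{s}_{kL\pm1}\in\mathbf{Z}$.
\item A path through $\mathbf{a}_{kL-1}$ continues either to $\mathbf{s}_{kL-1}$, which is conditioned on, or to some task $\mathbf{g}_j$, where it is blocked unless $j=i$ and $\mathbf{g}_i$ is opened.
\item A path through $\mathbf{o}_{kL}$ dead-ends, since $\mathbf{o}_{kL}$ is a leaf.
\end{itemize}
The remaining route leaves via $\mathbf{a}_{kL}$ and goes either to $\mathbf{s}_{kL+1}$, where it would have to pass $\mathbf{s}_{kL+1}$ as a collider, or to a task collider $\mathbf{g}_j$. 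A task $\mathbf{g}_j$ with $j\neq i$ is unconditioned and has only action parents and no conditioned descendants, so it blocks. Through $\mathbf{s}_{kL+1}$ as a collider with parents $\mathbf{s}_{kL}$ and $\mathbf{a}_{kL}$, the path returns into $\mathbf{s}_{kL}$'s own neighbourhood, and a simple path cannot revisit $\mathbf{s}_{kL}$. The symmetric analysis applies at $\mathbf{s}_{vL}$.

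\textbf{The main obstacle.} The hardest step is ruling out paths that pass through the opened collider $\mathbf{g}_i$ and through conditioned colliders such as $\mathbf{s}_{kL+1}$ (for example $\mathbf{a}_{kL}\to\mathbf{s}_{kL+1}\leftarrow\mathbf{a}_{kL+1}$...). Opening $\mathbf{g}_i$ connects all actions $\mathbf{a}_t$ with $t$ relevant to $\mathbf{g}_i$. Any such path must reach $\mathbf{s}_{kL}$ and $\mathbf{s}_{vL}$ through actions adjacent to them, namely $\mathbf{a}_{kL-1}$ or $\mathbf{a}_{kL}$ (and likewise at $v$), or through conditioned collider states $\mathbf{s}_{kL+1}$ whose other parent is $\mathbf{a}_{kL}$.

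To control this I would use the segment structure with $L\ge2$: the indices $kL-1$ and $kL$ lie in $\mathbf{S}_k$, so if $\mathbf{g}_i$ is not relevant to $\mathbf{S}_k$, neither $\mathbf{a}_{kL-1}$ nor $\mathbf{a}_{kL}$ points into $\mathbf{g}_i$. Any route from $\mathbf{s}_{kL}$ to $\mathbf{g}_i$ must then traverse some other state, and that traversal has to pass through $\mathbf{s}_{kL\pm1}$ or $\mathbf{s}_{kL+1}$ as a non-collider, which blocks it. It also cannot use $\mathbf{s}_{kL+1}$ as a collider and then continue, because the continuation from $\mathbf{s}_{kL+1}$ goes to $\mathbf{a}_{kL+1}$ or $\mathbf{s}_{kL+2}$ as a child, making $\mathbf{s}_{kL+1}$ a non-collider.

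I would check these cases carefully, including disconnected boundaries, where the relevant edges simply vanish, and indices at the ends of the sequence, where out-of-range variables are omitted. Once this neighbourhood argument is done, d-separation follows, and so does the theorem.
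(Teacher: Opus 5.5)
Your proposal is correct and follows essentially the paper's route. You reduce to d-separation via Markov and Faithfulness. For one direction you exhibit the open path $\mathbf{s}_{kL}\to\mathbf{a}_{kL}\to\mathbf{g}_i\leftarrow\mathbf{a}_{vL}\leftarrow\mathbf{s}_{vL}$; that step uses Faithfulness, not Markov as your heading says. For the other direction you show that the conditioned neighbours $\mathbf{s}_{kL\pm1}$ force any open path to leave $\mathbf{s}_{kL}$ through $\mathbf{a}_{kL-1}$ or $\mathbf{a}_{kL}$ into $\mathbf{g}_i$, with $kL-1,kL\in\mathbf{S}_k$ because $L\ge 2$. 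The only organizational difference is that you argue by contraposition at the single endpoint whose segment lacks $\mathbf{g}_i$. This spares you the paper's global cut argument for task-free paths in Lemma~\ref{lem:unique-task} and its full classification of open paths.

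One thing to repair. Your first bullet says every path through $\mathbf{s}_{kL+1}$ is blocked, and the ``exception'' you write down is not a simple path. The genuine exception is $\mathbf{s}_{kL}\to\mathbf{s}_{kL+1}\leftarrow\mathbf{a}_{kL}$, which is \emph{open}: $\mathbf{s}_{kL+1}\in\mathbf{Z}$ is a collider there. It is harmless, because the only unvisited neighbours of $\mathbf{a}_{kL}$ are tasks, so the path reduces to the case $\mathbf{a}_{kL}\to\mathbf{g}_j$. Your ``main obstacle'' paragraph does cover this, but its last sentence gives the wrong reason: the continuation is to $\mathbf{a}_{kL}$, not to a child of $\mathbf{s}_{kL+1}$.

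The paper's list of four forms in Lemma~\ref{lem:four-forms-band} misses this same path and its mirror $\mathbf{s}_{vL}\to\mathbf{s}_{vL+1}\leftarrow\mathbf{a}_{vL}$. Its proof wrongly asserts that the first neighbour of $\mathbf{s}_{kL}$ cannot be a state. So once you fix the bullet, your treatment is the more careful of the two. The theorem itself is unaffected, since these extra paths still require $\mathbf{a}_{kL}\to\mathbf{g}_i$ (resp.\ $\mathbf{a}_{vL}\to\mathbf{g}_i$).
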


\paragraph{Proof Sketch.}  
\looseness=-1
The proof (Appendix~\ref{sec:proof-temp}) relies on characterizing all possible d-connecting paths between $\mathbf{s}_{kL}$ and $\mathbf{s}_{vL}$ under the band conditioning set. Conditioning on the immediate boundary states blocks any path that propagates purely through the temporal dynamics, so dependence can only be transmitted through a shared task. Since tasks have only incoming edges, any task other than $\mathbf{g}_i$ appears as a closed collider and blocks the path, which implies that $\mathbf{g}_i$ must be the unique source of dependence. Careful consideration of local structures and corner cases then shows that the only admissible d-connecting paths are those where actions adjacent to $\mathbf{s}_{kL}$ and $\mathbf{s}_{vL}$ both feed into $\mathbf{g}_i$.

\paragraph{Implication.}  Theorem~\ref{thm:temporal} provides a provable way to determine whether two segments share the same task $\mathbf{g}_i$, giving an exact characterization of temporal task relevance (visualized in Fig. \ref{fig:example}). This is powerful: once we can identify the corresponding tasks of any pair of segments, the entire task structure can be discovered. Moreover, the condition is testable directly from observed data, since conditional independence is preserved under the invertible map $\mathbf{o}_t=f_t(\mathbf{s}_t)$ and the task variables $\mathbf{g}_i$ are observed. Hence the procedure requires no parametric assumptions and is broadly applicable. Finally, the result does not rely on restrictive structural constraints, allowing tasks to appear, disappear, and interleave in arbitrary order across time, and sequences can be disconnected. This directly generalizes the most common assumption of sequential completion.

Since all states within a segment share the same task set, conditional independence (CI) tests involving the boundary states are equivalent to tests involving any other pair of states within the two segments (provided $L>2$). Intuitively, this homogeneity means that the specific choice of representative states does not matter: any pair of states across two segments encodes the same task-level dependence. For example,
$\mathbf{s}_{kL}\notindep \mathbf{s}_{vL}\ \Big|\ \mathbf{Z}_{\mathrm{band}}(k,v,i)$ is equivalent to
$\mathbf{s}_{kL-1}\notindep \mathbf{s}_{vL-1}\ \Big|\ \{\mathbf{s}_{kL-2},\mathbf{s}_{kL},\mathbf{s}_{vL-2},\mathbf{s}_{vL}\}\cap\{\mathbf{s}_1,\dots,\mathbf{s}_T\} \cup \{\mathbf{g}_i\}.$
This invariance ensures that identifiability does not hinge on an arbitrary boundary choice, but is intrinsic to the task structure itself.

\begin{restatable}{corollary}{temporalalt}\label{cor:temporal_alt}
Assume the Markov property and Faithfulness with respect to the graph above, and $L>2$. Fix $k<v$ and a task $\mathbf{g}_{i}$. Then $\mathbf{g}_{i}$ is relevant to segments $\mathbf{S}_{k}$ and $\mathbf{S}_{v}$ iff
\begin{equation*}
  \mathbf{s}_{j}\notindep \mathbf{s}_{q}\ \Big|\ \{\mathbf{s}_{j-1},\mathbf{s}_{j+1},\mathbf{s}_{q-1},\mathbf{s}_{q+1}\}\cap\{\mathbf{s}_{1},\dots,\mathbf{s}_{T}\}\cup\{\mathbf{g}_{i}\},  
\end{equation*}
for any $j\in\{(k-1)L+1,\dots,kL\}$ and $q\in\{(v-1)L+1,\dots,vL\}$.
\end{restatable}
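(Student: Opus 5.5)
The plan is to redo the d-separation analysis behind Theorem~\ref{thm:temporal} with the representative pair $(\mathbf{s}_{kL},\mathbf{s}_{vL})$ replaced by an arbitrary pair $(\mathbf{s}_j,\mathbf{s}_q)$, with $j\in\mathbf{S}_k$ and $q\in\mathbf{S}_v$. The conditioning set is $\mathbf{Z}=\{\mathbf{s}_{j\pm1},\mathbf{s}_{q\pm1}\}\cap\{\mathbf{s}_1,\dots,\mathbf{s}_T\}\cup\{\mathbf{g}_i\}$. The proof of Theorem~\ref{thm:temporal} uses only local features of the graph. First, $\mathbf{s}_{kL}$ has immediate temporal neighbours $\mathbf{s}_{kL\pm1}$, and these are conditioned on. Second, the actions attached to it are $\mathbf{a}_{kL}$ and $\mathbf{a}_{kL-1}$, since $\mathbf{a}_{kL-1}\to\mathbf{s}_{kL}$ when the boundary is connected. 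Third, the task parents of $\mathbf{g}_i$ are actions $\mathbf{a}_t$. I would isolate this as a lemma: for any $j<q$ with $|j-q|\ge 2$ (guaranteed because $k<v$ and the segments are disjoint), $\mathbf{s}_j\notindep\mathbf{s}_q\mid\mathbf{Z}$ iff there are action nodes feeding $\mathbf{g}_i$ that are adjacent to $\mathbf{s}_j$ and to $\mathbf{s}_q$ respectively.

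\emph{Soundness direction.} Take any path from $\mathbf{s}_j$ to $\mathbf{s}_q$. If its first edge goes to $\mathbf{s}_{j\pm1}$, that node is a conditioned non-collider and blocks the path. The exception is when it is a collider, which in this graph can only happen at $\mathbf{s}_{j+1}$ via $\mathbf{s}_j\to\mathbf{s}_{j+1}\leftarrow\mathbf{a}_j$. In that case the path continues to $\mathbf{a}_j$, and $\mathbf{a}_j$ is a child of $\mathbf{s}_j$. The path therefore exits $\mathbf{s}_j$ through an action in every case: either $\mathbf{a}_j$, or $\mathbf{a}_{j-1}$ through $\mathbf{a}_{j-1}\to\mathbf{s}_j$.

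Actions have parents $\mathbf{s}_t$ and children $\mathbf{s}_{t+1}$ and tasks. Any task other than $\mathbf{g}_i$ is an unconditioned collider with no conditioned descendants, because tasks are sinks, so it blocks. Travelling along states means passing through chains, which must eventually hit a conditioned neighbour of $j$ or $q$ as a non-collider, or else leave through an unconditioned collider. The only open route is therefore $\mathbf{s}_j\to\mathbf{a}_j\to\mathbf{g}_i\leftarrow\mathbf{a}_{q}\leftarrow\mathbf{s}_q$, or the analogues using $\mathbf{a}_{j-1}$ and $\mathbf{a}_{q-1}$. Here $\mathbf{g}_i\in\mathbf{Z}$ opens the collider, and the endpoints pass through actions that are not conditioned. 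Such a route requires that the time steps $j$ (or $j-1$) and $q$ (or $q-1$) be relevant to $\mathbf{g}_i$.

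This is where $L>2$ and segment homogeneity enter. If $j$ is the first index $(k-1)L+1$ of its segment, then $j-1$ lies in the previous segment, which may have a different task set. Still, a route through $\mathbf{a}_{j-1}$ to $\mathbf{g}_i$ alone is not enough for an open path. I would argue that $\mathbf{a}_j\to\mathbf{g}_i$ must also hold for $\mathbf{g}_i$ to appear via the intended edge. Alternatively, and more simply, I would observe that in the generative model only $\mathbf{a}_t\to\mathbf{g}$ edges with $\mathbf{a}_t$ a child of $\mathbf{s}_t$ connect tasks. The path $\mathbf{s}_j\leftarrow\mathbf{a}_{j-1}\to\mathbf{g}_i$ makes $\mathbf{a}_{j-1}$ a fork. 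The fork is unconditioned, so this path is open, and I must handle it carefully. I expect this to be the main obstacle. My first attempt would be to note that the same situation already arises for $\mathbf{s}_{kL}$ in Theorem~\ref{thm:temporal}, through $\mathbf{a}_{kL-1}$. There, $kL-1$ lies in the same segment because $L\ge2$, and for interior $j$ we need both $j-1$ and $j$ in segment $k$. For $L>2$, whenever $j$ starts its segment, I would invoke the paper's convention that segment task sets are determined by the actions of their own steps, so that $\mathbf{a}_{j-1}\to\mathbf{g}_i$ together with relevance at $j$ is consistent. If this fails, I would restrict to the stated homogeneity claim.

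\emph{Completeness direction.} If $\mathbf{g}_i$ is relevant to both segments, then $\mathbf{a}_j\to\mathbf{g}_i\leftarrow\mathbf{a}_q$ by homogeneity. The path above is then d-connecting given $\mathbf{Z}$: its non-colliders $\mathbf{a}_j$ and $\mathbf{a}_q$ are unconditioned, and its collider $\mathbf{g}_i$ is conditioned. Faithfulness then yields the dependence.

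Finally, I would conclude that every choice of $(j,q)$ gives the same criterion, namely relevance of $\mathbf{g}_i$ to $\mathbf{S}_k$ and $\mathbf{S}_v$. This shows the equivalence with Theorem~\ref{thm:temporal}.
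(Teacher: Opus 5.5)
\textbf{There is a genuine gap, and it reflects a flaw in the statement itself.}

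Your route is the paper's. You rerun the local d-separation analysis for an arbitrary pair $(\mathbf{s}_j,\mathbf{s}_q)$. You show that every open path enters $\mathbf{g}_i$ through some $\mathbf{a}_p$ with $p\in\{j-1,j\}$ and leaves it through some $\mathbf{a}_r$ with $r\in\{q-1,q\}$, and then you invoke segment homogeneity. Your handling of the collider $\mathbf{s}_j\to\mathbf{s}_{j+1}\leftarrow\mathbf{a}_j$ is more careful than the paper's.

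However, the step you single out as the main obstacle is a real gap, and neither patch closes it. It cannot be closed, because the per-pair equivalence is false there. Let $j=(k-1)L+1$, assume the boundary $j-1\to j$ is connected, and suppose $\mathbf{g}_i$ is relevant to $\mathbf{S}_{k-1}$ and $\mathbf{S}_v$ but not to $\mathbf{S}_k$. Then the path $\mathbf{s}_j\leftarrow\mathbf{a}_{j-1}\to\mathbf{g}_i\leftarrow\mathbf{a}_q\leftarrow\mathbf{s}_q$ is d-connecting given the local set: both actions are unconditioned non-colliders, and $\mathbf{g}_i$ is a conditioned collider. Faithfulness therefore gives $\mathbf{s}_j\notindep\mathbf{s}_q$, even though $\mathbf{g}_i$ is not relevant to $\mathbf{S}_k$. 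The symmetric failure occurs at $q=(v-1)L+1$ via $\mathbf{a}_{q-1}\in\mathbf{S}_{v-1}$.

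Your patches do not help:
\begin{itemize}
\item Nothing forces $\mathbf{a}_j\to\mathbf{g}_i$.
\item The convention that a segment's task set is read off its own actions only confirms that $\mathbf{a}_{j-1}\to\mathbf{g}_i$ is information about $\mathbf{S}_{k-1}$.
\item ``Relevance at $j$'' is the conclusion of this direction, so it cannot be assumed.
\end{itemize}
The paper's $(\Leftarrow)$ step has exactly the same hole. It appeals to segment homogeneity, which says nothing about $\mathbf{a}_{j-1}$ when $j-1\notin\mathbf{S}_k$.

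\textbf{Second error.} $|j-q|\ge 2$ does not follow from disjointness of the segments. Take $v=k+1$, $j=kL$, $q=kL+1$. The conditioning set then contains $\mathbf{s}_{j+1}=\mathbf{s}_q$ and $\mathbf{s}_{q-1}=\mathbf{s}_j$, so the test is degenerate and your completeness direction fails for that pair. The paper does not address this case either.

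\textbf{What your argument does prove.} It establishes the statement with $j\in\{(k-1)L+2,\dots,kL\}$ and $q\in\{(v-1)L+2,\dots,vL\}$. This range contains indices other than $kL$ and $vL$ exactly when $L>2$, and the paper's worked example $j=kL-1$ lies in it. On this range $j-1,j\in\mathbf{S}_k$, $q-1,q\in\mathbf{S}_v$, and $q-j\ge 2$. So your soundness analysis (via the Markov property) goes through, and so does your completeness path $\mathbf{s}_j\to\mathbf{a}_j\to\mathbf{g}_i\leftarrow\mathbf{a}_q\leftarrow\mathbf{s}_q$ (via Faithfulness).

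Alternatively, read ``for any'' as ``for all pairs'' inside the equivalence. Then $(\Leftarrow)$ is just Theorem~\ref{thm:temporal} applied at $(kL,vL)$, and $(\Rightarrow)$ is your completeness path, again excluding the degenerate adjacent pair.
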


This corollary does not impose additional conditions but establishes an equivalent characterization, guaranteed by the basic coherence of the tasks. It strengthens the applicability of Thm.~\ref{thm:temporal} by showing that task relevance can be tested using arbitrary representatives within segments, not only their boundaries. Conceptually, this flexibility highlights that identifiability of the temporal task structure arises from the global dependency pattern induced by colliders, rather than from local temporal adjacency. As a consequence, the result is robust to segmentation choices and ensures that the recovered structure reflects intrinsic properties of the underlying process rather than artifacts.

\vspace{-0.3em}
\subsection{Discovering Global Task Structure}
\vspace{-0.3em}

Building on Theorem~\ref{thm:temporal} and Corollary~\ref{cor:temporal_alt}, the characterization of task relevance naturally yields an algorithmic procedure. With the proposed test, one can systematically determine whether two segments share a common task. Aggregating these pairwise tests across all segment pairs yields the complete temporal task structure, as detailed in Algorithm~\ref{alg:pairwise-ci-final}.

\begin{restatable}{proposition}{globalcorrectness}\label{prop:global}
Under the conditions of Theorem~\ref{thm:temporal}, Algorithm~\ref{alg:pairwise-ci-final} exactly recovers the temporal task structure. 
\end{restatable}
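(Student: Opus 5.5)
The argument reduces Proposition~\ref{prop:global} to Theorem~\ref{thm:temporal}, applied once for every triple $(k,v,i)$ with $1\le k<v\le N$ and $i\in\{1,\dots,M\}$. We read Algorithm~\ref{alg:pairwise-ci-final} as doing the following: for each such triple it runs the conditional independence test $\mathbf{s}_{kL}\indep \mathbf{s}_{vL}\mid \mathbf{Z}_{\mathrm{band}}(k,v,i)$; whenever the test rejects, it marks $\mathbf{g}_i$ as relevant to both $\mathbf{S}_k$ and $\mathbf{S}_v$. The output is the bipartite segment--task structure $\hat{E}=\{(\mathbf{S}_k,\mathbf{g}_i)\}$ formed by collecting all marks. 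The target is the equality $\hat E = E$, where $E$ is the true relevance relation.

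\textbf{Step 1 (tests are on observables).} Because each $f_t$ is a diffeomorphism onto its image, $\mathbf{s}_t$ and $\mathbf{o}_t$ generate the same $\sigma$-algebra. Hence every conditional independence statement among the $\mathbf{s}_t$ and $\mathbf{g}_i$ holds exactly when the same statement holds with each $\mathbf{s}_t$ replaced by $\mathbf{o}_t$. The $\mathbf{g}_i$ are observed, so each test the algorithm runs has an oracle answer determined by the observed distribution, and that answer matches the latent statement in Theorem~\ref{thm:temporal}.

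\textbf{Step 2 (pairwise exactness).} Fix $(k,v,i)$. Theorem~\ref{thm:temporal} says the test reports dependence if and only if $\mathbf{g}_i$ is relevant to both $\mathbf{S}_k$ and $\mathbf{S}_v$. So the set of triples the algorithm marks is exactly $P=\{(k,v,i): (\mathbf{S}_k,\mathbf{g}_i)\in E \text{ and } (\mathbf{S}_v,\mathbf{g}_i)\in E\}$.

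\textbf{Step 3 (from pairs to the global structure).} This step is the main obstacle, because we must recover single memberships from joint ones. The key is the modeling requirement that each task appears in at least two segments.
\begin{itemize}
\item Soundness: if $(\mathbf{S}_k,\mathbf{g}_i)\in E$, then some other segment $\mathbf{S}_v$ with $v\neq k$ also has $(\mathbf{S}_v,\mathbf{g}_i)\in E$. The ordered pair $(\min(k,v),\max(k,v),i)$ is tested and lies in $P$, so $(\mathbf{S}_k,\mathbf{g}_i)\in\hat E$.
\item Completeness: if $(\mathbf{S}_k,\mathbf{g}_i)\in\hat E$, then some triple in $P$ contains $k$, and by the definition of $P$ this gives $(\mathbf{S}_k,\mathbf{g}_i)\in E$.
\end{itemize}
Together these give $\hat E=E$. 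Since segments are contiguous blocks of known length $L$ and all states in a segment share the same task set, the segment--task relation determines the step--task relation: $\mathbf{s}_t$ is linked to $\mathbf{g}_i$ exactly when its segment $\mathbf{S}_{\lceil t/L\rceil}$ is.

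The remaining work is to check that the algorithm's bookkeeping matches this reading. One point is that a task appearing in only one segment cannot occur, by assumption; this is exactly why soundness needs that requirement. Another is that the result holds with no restriction on interleaving or disconnection, because Theorem~\ref{thm:temporal} already covers those cases. If the algorithm instead uses arbitrary within-segment representatives, we would invoke Corollary~\ref{cor:temporal_alt} in Step 2 in place of Theorem~\ref{thm:temporal}, and the argument is otherwise unchanged.
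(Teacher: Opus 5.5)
Your proposal is correct and takes essentially the same route as the paper. Both apply Theorem~\ref{thm:temporal} to each triple $(k,v,i)$, use a second segment containing $\mathbf{g}_i$ to show every true membership is added, use the ``only if'' direction to rule out spurious additions, and obtain per-step labels by assignment. Your version is a bit more careful: it explicitly invokes the requirement that each task appears in at least two segments (the paper uses this only implicitly) and justifies running the tests on observables. Your labels ``soundness'' and ``completeness'' are swapped relative to the usual convention, but this is only cosmetic.
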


The procedure is not only theoretically solid but also computationally efficient, which scales with the temporal horizon rather than the observation dimension. Moreover, because conditional independence is preserved under the invertible observation map, the tests can be performed directly in the observed space, without knowledge of the latent states or parametric assumptions on the dynamics. This provides an operational bridge from identifiability theory to practice: hidden temporal task structure can be precisely recovered by a simple, general, and provably correct algorithm, even in environments with arbitrary interleaving, recurrence, and disconnections across time.

\begin{algorithm}[H]
\caption{Global task structure discovery}
\label{alg:pairwise-ci-final}
\nlset{}
\Input{Segments $\mathbf S_{1:N}$ of length $L\ge2$; tasks $\mathbf G=\{\mathbf g_{1:M}\}$}
\Output{Segment–task sets $\{\mathcal T(\mathbf S_k)\}_{k=1}^N$ and step labels $\{\mathcal T(t)\}_{t=1}^T$}

$\mathcal T(\mathbf S_{1:N}) \gets [\emptyset]^N$; \quad
$\mathcal P \gets \{(k,v)\mid 1\le k<v\le N\}$

\ForEach{$i\in[1..M]$}{
  \ForEach{$(k,v)\in\mathcal P$}{
    \If{$\mathbf s_{kL}\notindep \mathbf s_{vL}\mid \mathbf Z_{\mathrm{band}}(k,v,i)$}{
      $\mathcal T(\mathbf S_k)\gets\mathcal T(\mathbf S_k)\cup\{\mathbf g_i\}$;\,
      $\mathcal T(\mathbf S_v)\gets\mathcal T(\mathbf S_v)\cup\{\mathbf g_i\}$}
  }
}

\ForEach{$k\in[1..N]$}{\lForEach{$t\in\mathbf S_k$}{ $\mathcal T(t)\gets\mathcal T(\mathbf S_k)$ }}

\Return{$\{\mathcal T(\mathbf S_k)\}_{k=1}^N,\ \{\mathcal T(t)\}_{t=1}^T$}
\end{algorithm}

\paragraph{What If Tasks Are Not Given.}
In practice, tasks may be unobserved and must be inferred from data. In this case, we treat the inferred task representation as a latent variable and apply the CI tests to it directly, preserving the original logic. To avoid confounding, the representation is learned independently of the CI relations being evaluated. Since representation learning is conceptually separate from temporal structure recovery, extending the method to latent task settings remains fully feasible.

Moreover, the method does not require prior knowledge of the exact task set. Starting from a large pool of candidate tasks, the algorithm provably recovers the correct subset together with its temporal structure. This assumption is substantially weaker than knowing the true task set in advance. In practice, one often has access to or can infer a broad collection of basic tasks and only needs to identify which of them, and in what structure, appear in the trajectory. Therefore, our problem setting fits a wide range of real-world scenarios even without a precise knowledge of the task set.

\paragraph{Complexity.}
The main practical trade-off concerns computational complexity. For large-scale datasets, using very short segment lengths leads to a large number of segments and thus many candidate temporal structures. While this does not affect correctness, the runtime grows linearly with the number of segments. In practice, increasing segment length can significantly reduce computational cost, at the expense of a modest loss in temporal resolution. This provides a controllable accuracy–scalability trade-off.


\vspace{-0.3em}
\section{Learning Task-Relevant Representation} \label{sec:instant-structure}
\vspace{-0.3em}

Having established the identifiability of temporal task structure, we now turn to the problem of learning task-relevant representations within each time step. Identifying which tasks are active at which times clarifies the dynamics across segments and ensures that temporal dependencies are properly aligned with task boundaries. This strengthens the focus on the temporal dimension, but it does not yet resolve the finer question of representation: within a single latent state $\mathbf{s}_t$, only a subset of variables may be relevant to the task, while the rest correspond to nuisance factors. To obtain a minimal yet sufficient representation, we must therefore dig deeper into the latent space of $\mathbf{s}_t$ and disentangle the components that are truly task-relevant from those that are irrelevant. Specifically, we aim to ensure that the estimated latents (e.g., $\mathbf{s}_{t_1}$) associated with each task (e.g., $t_1$) are not functions of any other latent variables, whether tied to other tasks or unrelated altogether

Identifiability of the latent variables concerns recovering the unique ground truth $\mathbf{s}_t$ from two observationally equivalent models $\mathbf{o}_t=f_t(\mathbf{s}_t)$ and $\mathbf{o}_t=\hat{f}_t(\hat{\mathbf{s}}_t)$. Let $\mathbf{g} = u(\mathbf{s},\theta)$ and $\hat{\mathbf{g}}=\hat{u}(\hat{\mathbf{s}},\hat{\theta})$, where $\theta$ and $\hat{\theta}$ denote variables other than $\mathbf{s}$ and $\hat{\mathbf{s}}$. These mappings exist due to the dependency structure $\mathbf{s}_t \to \mathbf{a}_t \to \mathbf{g}_i$. With slight abuse of notation, we mostly omit $\theta$ and $\hat{\theta}$ and write $\mathbf{g}=u(\mathbf{s})$ and $\hat{\mathbf{g}}=\hat{u}(\hat{\mathbf{s}})$ for brevity. 

\vspace{-0.3em}
\subsection{Identifiability with a Generalist Model}
\vspace{-0.3em}

We begin by asking what can be achieved without imposing any structural constraint beyond observational equivalence. That is, we consider a \emph{generalist model} without explicitly being regularized to focus on the corresponding tasks. While such a model may capture the necessary information for prediction, its ability to recover the ground-truth task–relevant latent representation is limited.  

\paragraph{Additional Notation.}  
For a vector-valued function $u:\mathbb{R}^{d_s}\to\mathbb{R}^{d_g}$, we denote by $J_u(\mathbf{s}_t)$ the Jacobian matrix with respect to $\mathbf{s}_t$, whose $(i,j)$ entry is $\partial u_i/\partial s_{t,j}$.  
For a vector or matrix $A$, we write $\mathcal{I}(A)$ for the set of indices corresponding to its nonzero entries, and $\|\mathcal{I}(A)\|$ for its cardinality (the number of nonzeros, i.e., the $\ell_0$ norm).  
We denote $I_k\subseteq [d_s]$ as the set of indices of the latent variables relevant to task $g_k$, and $\mathbf{s}_{t,I_k}$ as the corresponding set of latent variables.

\begin{restatable}{proposition}{inv}\label{prop:inv}
Assume that, for each $i \in [d_g]$, there exists a set $\mathcal{N}_i$ of $\|\mathcal{I}(J_u(\mathbf{s}_t)_{i,\cdot})\|$ distinct points such that the corresponding Jacobian row vectors
\begin{equation*}
\small
\left( \frac{\partial u_i}{\partial s_{t,1}}, \frac{\partial u_i}{\partial s_{t,2}}, \ldots, \frac{\partial u_i}{\partial s_{t,d_s}} \right) \bigg|_{\mathbf{s}_t= \mathbf{s}_t^{(l)}}, \quad l \in \mathcal{N}_i,
\end{equation*}
are linearly independent, and $\mathcal{I}\Big((J_u(\mathbf{s}_t^{(l)})\, M)_{i,\cdot}\big) \subseteq \mathcal{I}\Big( (J_{\hat{u}}(\hat{\mathbf{s}}_t^{(l)}))_{i,\cdot} \Big)$, where $M$ is a matrix sharing the nonzero index set of matrix-valued function $M'(\mathbf{s},\hat{\mathbf{s}})$ in $J_u(\mathbf{s}) M'(\mathbf{s},\hat{\mathbf{s}}) = J_{\hat{u}}(\hat{\mathbf{s}})$.  
Then, for any task $\mathbf{g}_k$ with latent index set $I_k$, the number of estimated task-relevant latent variables is larger than that of the ground truth, i.e.,
\[
\|\mathcal{I}\big((J_{\hat{u}})_{i,\cdot}\big)\| \;\geq\; \|\mathcal{I}\big((J_u)_{i,\cdot}\big)\|.
\]
\end{restatable}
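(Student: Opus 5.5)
Following the now-standard sparsity-based identifiability arguments in nonlinear ICA, we plan to compare Jacobian supports of the two observationally equivalent task maps and show that the true support must embed injectively into the estimated one.

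\emph{Step 1: the chain-rule relation.} Since $\mathbf{o}_t=f_t(\mathbf{s}_t)=\hat f_t(\hat{\mathbf{s}}_t)$ with both maps diffeomorphisms onto their image, there is a diffeomorphism $h=f_t^{-1}\circ\hat f_t$ with $\mathbf{s}_t=h(\hat{\mathbf{s}}_t)$. Because $\mathbf{g}=u(\mathbf{s}_t)=\hat u(\hat{\mathbf{s}}_t)$, we have $\hat u=u\circ h$, and the chain rule gives
\[
J_{\hat u}(\hat{\mathbf{s}}_t)=J_u(\mathbf{s}_t)\,J_h(\hat{\mathbf{s}}_t).
\]
So $M'=J_h$, which is invertible everywhere. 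Let $M$ be a fixed matrix with the same support as $M'$. The assumption then reads $\mathcal{I}((J_u(\mathbf{s}^{(l)}_t)M)_{i,\cdot})\subseteq\mathcal{I}((J_{\hat u})_{i,\cdot})$ for all $l\in\mathcal{N}_i$.

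\emph{Step 2: span argument.} Fix row $i$ and let $S_i=\mathcal{I}((J_u)_{i,\cdot})$, with $|S_i|$ points. The row vectors $J_u(\mathbf{s}^{(l)}_t)_{i,\cdot}$ for $l\in\mathcal{N}_i$ are $|S_i|$ linearly independent vectors, all supported on $S_i$. They therefore span the coordinate subspace $\mathbb{R}^{S_i}$, and in particular every unit vector $e_j$ with $j\in S_i$ is a linear combination of them. Multiplying by $M$ on the right, $e_j^\top M=M_{j,\cdot}$ is a linear combination of the rows $(J_u(\mathbf{s}^{(l)}_t)M)_{i,\cdot}$. Each of these is supported in $\hat S_i:=\mathcal{I}((J_{\hat u})_{i,\cdot})$, so for every $j\in S_i$ we get $\mathcal{I}(M_{j,\cdot})\subseteq\hat S_i$.

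\emph{Step 3: matching, the main obstacle.} It remains to extract $|S_i|$ distinct columns of $\hat S_i$. Since $M$ shares its support with an invertible $J_h$, some permutation $\sigma$ has $M_{j,\sigma(j)}\neq 0$ for all $j$; this follows from a nonzero term in the Leibniz expansion of $\det J_h$ at some point. Step 2 then gives $\sigma(j)\in\hat S_i$ for all $j\in S_i$. Because $\sigma$ is injective, $|\hat S_i|\ge|S_i|$, which is the claim, and the same bound holds for the block $I_k$ of any task $\mathbf{g}_k$.

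The delicate point is justifying that a single $M$ with the support of $M'$ admits such a permutation. We intend to handle this by taking the support of $M'$ as the union over points, and invoking invertibility at one point to obtain the perfect matching. We also need to make sure Step 2's linear combination uses the same $M$ at all points $l$; the hypothesis is stated with a common $M$, which makes this consistent.
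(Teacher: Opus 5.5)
Your proposal is correct and follows essentially the same route as the paper. You use the chain rule $J_{\hat u}=J_u J_{\phi^{-1}}$, then the span argument placing each row $M_{j,\cdot}$ with $j\in\mathcal{I}((J_u)_{i,\cdot})$ inside $\mathcal{I}((J_{\hat u})_{i,\cdot})$, then a nonzero Leibniz term of the invertible Jacobian giving an injective $\pi$ into the estimated support; your explicit observation that the $|S_i|$ independent rows span the coordinate subspace $\mathbb{R}^{S_i}$, so each $e_j$ lies in their span, merely spells out a step the paper leaves implicit.
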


\paragraph{Proof Sketch.}  
The argument starts by connecting the support of the Jacobian to the underlying dependency graph. The span condition ensures that the information is being preserved during estimation, and thus no true dependency can be eliminated in the transformation between $\mathbf{s}$ and $\hat{\mathbf{s}}$. Equivalently, the nonzero pattern of $J_u(\mathbf{s})$ must be contained within that of $J_{\hat{u}}(\hat{\mathbf{s}})$. Translated back to the task–latent structure, this implies that the number of the estimated task-relevant latent variables, as captured by the support, is always a superset of the true one.

\paragraph{Discussion on Assumptions.}  
The requirement of sufficient nonlinearity is standard in identifiability analyses of nonlinear models \citep{lachapelle2021disentanglement,zhengidentifiability}. Specifically, it rules out degenerate cases where samples concentrate on an extremely small subset (e.g., as few as several samples) such that the Jacobian vectors cannot even span their own supports. At the same time, identifiability is defined as an asymptotic property (infinite samples), and the assumption only requires the existence of several nondegenerate samples in the whole space, which is almost always satisfied in practice. More detailed discussion on the assumption is in Appendix~\ref{appx:discuss}.

\paragraph{Implication.}
This result shows that, without explicit modelling of specific tasks, generalist models tend to learn a representation that is larger than necessary. The conclusion is intuitive: a sufficiently expressive foundation model can capture a representation that contains all information needed for downstream tasks. 

At the same time, the guarantee
$\|\mathcal{I}(J_{\hat{u}})_{i,\cdot}\| \;\geq\; \|\mathcal{I}(J_u)_{i,\cdot}\|$
remains weak: it ensures only that the estimated representation is no smaller in \textit{size} than the true one, not that it matches it or recovers the correct variables. In practice, this means a generalist model often learns an overcomplete representation, where task-relevant variables may still be missed or entangled with irrelevant ones. Worse, the inequality provides no guarantee on recovering variable values since the enlarged representation may distort or discard essential information. This formalizes the intuition that while frontier generalist models are expressive enough to encode all task information, without additional regularization they fail to isolate the minimal task-relevant latent representation.

\subsection{From Generalist to Specialist}

The previous result shows that a generalist model often produces an enlarged representation, estimating more task-relevant variables than truly exist. Such oversizing does not guarantee that all genuine factors are preserved; irrelevant latents may be included, while essential ones can still be distorted or obscured. To recover the \emph{true} task-relevant representation, additional inductive bias is needed. 

A natural choice is sparsity regularization on the estimated task–latent structure, which enforces minimality in the recovered structure. Intuitively, sparsity prunes away superfluous dimensions and curbs over-expansion, ensuring that the final representation retains only the variables genuinely required for each task. The corresponding gurantee is as follows:

\begin{restatable}{theorem}{instantaneous}\label{thm:instantaneous}
Consider two observationally equivalent generative processes $\mathbf{o}_t = f_t(\mathbf{s}_t)$ and $\mathbf{o}_t = \hat{f}_t(\hat{\mathbf{s}}_t)$, and assume the conditions in Proposition~\ref{prop:inv}.  
Then, for any task $\mathbf{g}_k$ with latent index set $I_k$, with a sparsity regularization
\begin{equation*}
    \| \mathcal{I}(J_{\hat{u}}) \| \;\le\; \| \mathcal{I}(J_u) \|,
\end{equation*}
under some permutation $\pi$, the estimated task-relevant latent variables $\hat{\mathbf{s}}_{t,\pi(I_k)}$ are an invertible function $h_k$ of only the ground-truth task-relevant latent variables $\mathbf{s}_{t,I_K}$, i.e.,
\begin{equation*}
    \hat{\mathbf{s}}_{t,\pi(I_k)} \;=\; h_k(\mathbf{s}_{t,I_K}).
\end{equation*}
\end{restatable}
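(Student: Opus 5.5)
The plan is to combine the support inclusion behind Proposition~\ref{prop:inv} with the sparsity constraint to get equality of supports. From that equality I will extract a permutation structure on the Jacobian of the map $\hat{\mathbf{s}}_t \mapsto \mathbf{s}_t$.

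First, write the indeterminacy. Observational equivalence gives $\hat{\mathbf{s}}_t = \hat{f}_t^{-1}\circ f_t(\mathbf{s}_t) =: \phi(\mathbf{s}_t)$, which is a diffeomorphism. Since $\mathbf{g}=u(\mathbf{s}_t)=\hat{u}(\hat{\mathbf{s}}_t)$, the chain rule gives $J_{\hat u}(\hat{\mathbf{s}}_t) = J_u(\mathbf{s}_t)\, M'(\mathbf{s}_t,\hat{\mathbf{s}}_t)$ with $M' = J_{\phi^{-1}}$, which is invertible everywhere. This is exactly the $M'$ appearing in Proposition~\ref{prop:inv}.

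Second, I will strengthen the size inequality to a structural statement, as in the proof of Proposition~\ref{prop:inv}. For each row $i$, the Jacobian rows at the points $\mathcal{N}_i$ are linearly independent and span the coordinate subspace indexed by $\mathcal{I}((J_u)_{i,\cdot})$. Combined with the assumed containment $\mathcal{I}((J_u M)_{i,\cdot}) \subseteq \mathcal{I}((J_{\hat u})_{i,\cdot})$, this yields: for every $j \in \mathcal{I}((J_u)_{i,\cdot})$, the row $M_{j,\cdot}$ has support inside $\mathcal{I}((J_{\hat u})_{i,\cdot})$. To see this, take the unit vector $e_j$ as a combination of the spanning rows; then $e_j^\top M = M_{j,\cdot}$ is a combination of rows supported in that set. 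Hence $\mathcal{I}((J_u)_{i,\cdot})\ni j$ implies $\{i\}\times\mathcal{I}(M_{j,\cdot}) \subseteq \mathcal{I}(J_{\hat u})$.

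Third, because $M$ is invertible, its determinant expansion contains a nonzero term, so there is a permutation $\sigma$ with $M_{j,\sigma(j)}\neq 0$ for all $j$. Applying the previous step gives $(i,j)\in\mathcal{I}(J_u) \Rightarrow (i,\sigma(j))\in\mathcal{I}(J_{\hat u})$. This map is injective, so $\|\mathcal{I}(J_{\hat u})\|\ge\|\mathcal{I}(J_u)\|$. The sparsity regularization forces equality. Therefore $\mathcal{I}(J_{\hat u})$ equals $\mathcal{I}(J_u)$ with columns permuted by $\pi:=\sigma$. In particular, the estimated latents relevant to $\mathbf{g}_k$ are exactly $\hat{\mathbf{s}}_{t,\pi(I_k)}$, and for $j\in I_k$ the set $\mathcal{I}(M_{j,\cdot})$ lies in $\pi(I_k)$.

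Finally, I need to show that $\hat{\mathbf{s}}_{t,\pi(I_k)}$ depends only on $\mathbf{s}_{t,I_k}$. Equivalently, for $j\notin I_k$ and $l\in\pi(I_k)$, I need $\partial \hat{s}_{t,l}/\partial s_{t,j}=0$. The step above only controls the rows of $M=J_{\phi^{-1}}$, giving $\partial s_{t,j}/\partial \hat s_{t,l'}=0$ for $j\in I_k$ and $l'\notin\pi(I_k)$. That makes $J_{\phi^{-1}}$ block triangular, with a zero block for rows $I_k$ against columns outside $\pi(I_k)$. The inverse of an invertible block-triangular matrix is block triangular with the same zero pattern, and its diagonal blocks are invertible. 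Therefore the rows $\pi(I_k)$ of $J_\phi$ vanish on the columns outside $I_k$, which is exactly the required statement. Integrating along the connected support then gives $\hat{\mathbf{s}}_{t,\pi(I_k)}=h_k(\mathbf{s}_{t,I_k})$. Invertibility of $h_k$ follows from the invertible diagonal block, giving a local diffeomorphism, together with global injectivity of $\phi$ restricted appropriately.

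The main obstacle is the last step: rigorously passing from a pointwise support equality (which uses $M$ as a constant pattern of $M'$) to the block structure of $M'(\mathbf{s},\hat{\mathbf{s}})$ at every point. A second difficulty is making sure the same $\pi$ works uniformly across tasks. I would handle this by fixing $\sigma$ from the generic support of $M'$ and using the equality of supports to show that any other permutation would add an extra nonzero entry, contradicting the sparsity constraint.
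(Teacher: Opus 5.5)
Your proof is correct. Up to the support equality it follows the paper: the chain rule $J_{\hat u}=J_uJ_{\phi^{-1}}$; the span argument giving $\mathcal{I}(M_{j,\cdot})\subseteq\mathcal{I}((J_{\hat u})_{i,\cdot})$ for $j\in\mathcal{I}((J_u)_{i,\cdot})$; a permutation from the determinant expansion; and the sparsity constraint turning the injection $(i,j)\mapsto(i,\pi(j))$ into a bijection. This gives $\mathcal{I}(M_{c,\cdot})\subseteq\pi(I_k)$ for $c\in I_k$.

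The routes differ only in the last step. The paper takes the zero block $\partial s_{t,c}/\partial\hat s_{t,\pi(r)}=0$ ($c\in I_k$, $r\notin I_k$). It then tries to obtain the opposite block $\partial s_{t,r}/\partial\hat s_{t,\pi(c)}=0$ by appealing to statistical independence of $s_{t,r}$ and $s_{t,c}$, declares $J_{\phi^{-1}}$ block diagonal, and inverts. You instead observe that one zero block already makes $J_{\phi^{-1}}$ block triangular. Its diagonal blocks are square, since $|\pi(I_k)|=|I_k|$, and invertible, since $\det J_{\phi^{-1}}$ is their product. Hence $J_\phi$ inherits the zero block in rows $\pi(I_k)$ and columns $[d_s]\setminus I_k$, which is exactly the claim $\hat{\mathbf{s}}_{t,\pi(I_k)}=h_k(\mathbf{s}_{t,I_k})$.

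Your route buys rigor: it uses only the stated hypotheses. The paper's extra step needs latent independence, which the paper says elsewhere it does not assume. Even with independence, the block-diagonal conclusion does not follow. For example, take a single task depending only on $s_{t,1}$, independent $s_{t,1},s_{t,2}$, and the map $\hat s_{t,1}=s_{t,1}$, $\hat s_{t,2}=s_{t,1}+s_{t,2}$. This satisfies the span and inclusion conditions and the sparsity constraint, yet $\partial s_{t,2}/\partial\hat s_{t,1}=-1$. So the symmetric statement the paper's route aims for is not available in general. The one-sided statement the theorem actually asserts is what your argument delivers.

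Three small repairs are needed.
\begin{itemize}
\item The pattern matrix $M$ need not itself be invertible. Take $\pi$ from the determinant expansion of $M'$ at a single point; its nonzero entries lie in $\mathcal{I}(M')=\mathcal{I}(M)$, as you suggest at the end. Because this $\pi$ is chosen once, independently of the row $i$, uniformity across tasks is automatic.
\item For invertibility of $h_k$, appealing to injectivity of $\phi$ does not suffice: points with equal $\hat{\mathbf{s}}_{t,\pi(I_k)}$ can differ in the other coordinates. Instead use the zero block of $J_{\phi^{-1}}$ you already have. It says $\mathbf{s}_{t,I_k}=g_k(\hat{\mathbf{s}}_{t,\pi(I_k)})$, and $g_k$ and $h_k$ are then mutual inverses on the projected supports.
\item Passing from vanishing partial derivatives to ``function of $\mathbf{s}_{t,I_k}$ only'' needs connected fibres, not merely a connected support. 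A convex or product support is enough; the paper leaves this implicit as well.
\end{itemize}
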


\paragraph{Proof Sketch.}  
Under the span assumptions from Proposition~\ref{prop:inv}, we first show that every nonzero entry of $J_u(\mathbf{s})$ must correspond to a nonzero entry of $J_{\hat{u}}(\hat{\mathbf{s}})$, up to a column permutation $\pi$. Sparsity regularization enforces that no additional entries can remain nonzero, which upgrades inclusion into exact equivalence of supports. Finally, algebraic analysis helps move from structure to variables, exploiting the separation between task-relevant and task-irrelevant latents.

\paragraph{Implication.}  
\textit{Practically}, Theorem~\ref{thm:instantaneous} establishes the formal guarantees on going from a generalist to a specialist model. It shows that, based on the general guarantee in Proposition~\ref{prop:inv}, a simple sparsity regularization is sufficient to disentangle task-relevant latent variables from the irrelevant ones. Unlike the generalist guarantee, which recovers a superset of the true support, the sparsity constraint sharpens recovery to the variable-level and disentangles irrelevant parts. Conceptually, this result highlights the necessity of moving from generalist to specialist modeling: while a generalist can cover all possible dependencies, only task-specific modeling with appropriate regularization yields a representation that is both minimal and faithful. This provides a principled justification for why specialist models can achieve disentangled task representations where generalist models cannot, offering formal guarantees for the intuition.

\textit{Theoretically}, our result suggests a new strategy for provably uncovering the latent variables underlying the observational world. Importantly, because we allow arbitrary disconnections between time steps, Theorem~\ref{thm:instantaneous} also covers the i.i.d. setting. This is a substantially harder case than prior work that exploits temporal information or domain shifts, since changes across time or environments inherently provide extra signals for identification, whereas identifiability in the absence of changes is notoriously difficult. Existing i.i.d. results rely either on restrictive functional assumptions \citep{taleb1999source,buchholz2022function} or graphical criteria on the underlying structure \citep{moran2021identifiable,zhengidentifiability} to achieve full component-wise identifiability. By contrast, our focus is not on recovering every individual latent, but rather on identifying all task-relevant ones as a subgroup. This relaxation allows us to bypass such strong assumptions and still establish general identifiability guarantees. Beyond our setting, this perspective may prove methodologically useful for a wide range of latent-variable problems, where isolating task-relevant latents is important.

\begin{figure*}[t]
    \centering
    \begin{subfigure}[b]{0.4\textwidth}
        \centering
        \includegraphics[width=\linewidth]{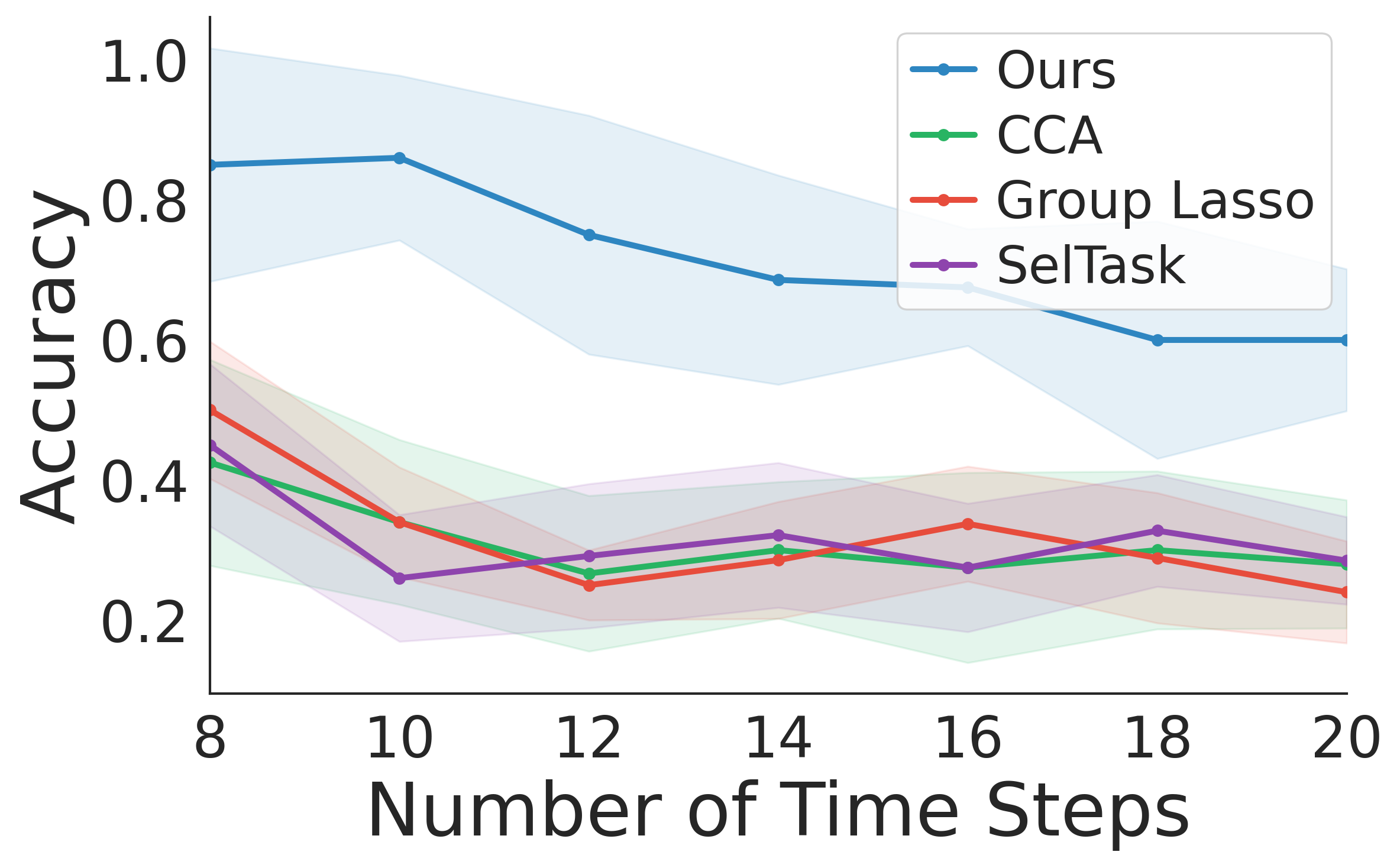}
    \end{subfigure}
    \hspace{1.0em}
    \begin{subfigure}[b]{0.4\textwidth}
        \centering
        \includegraphics[width=\linewidth]{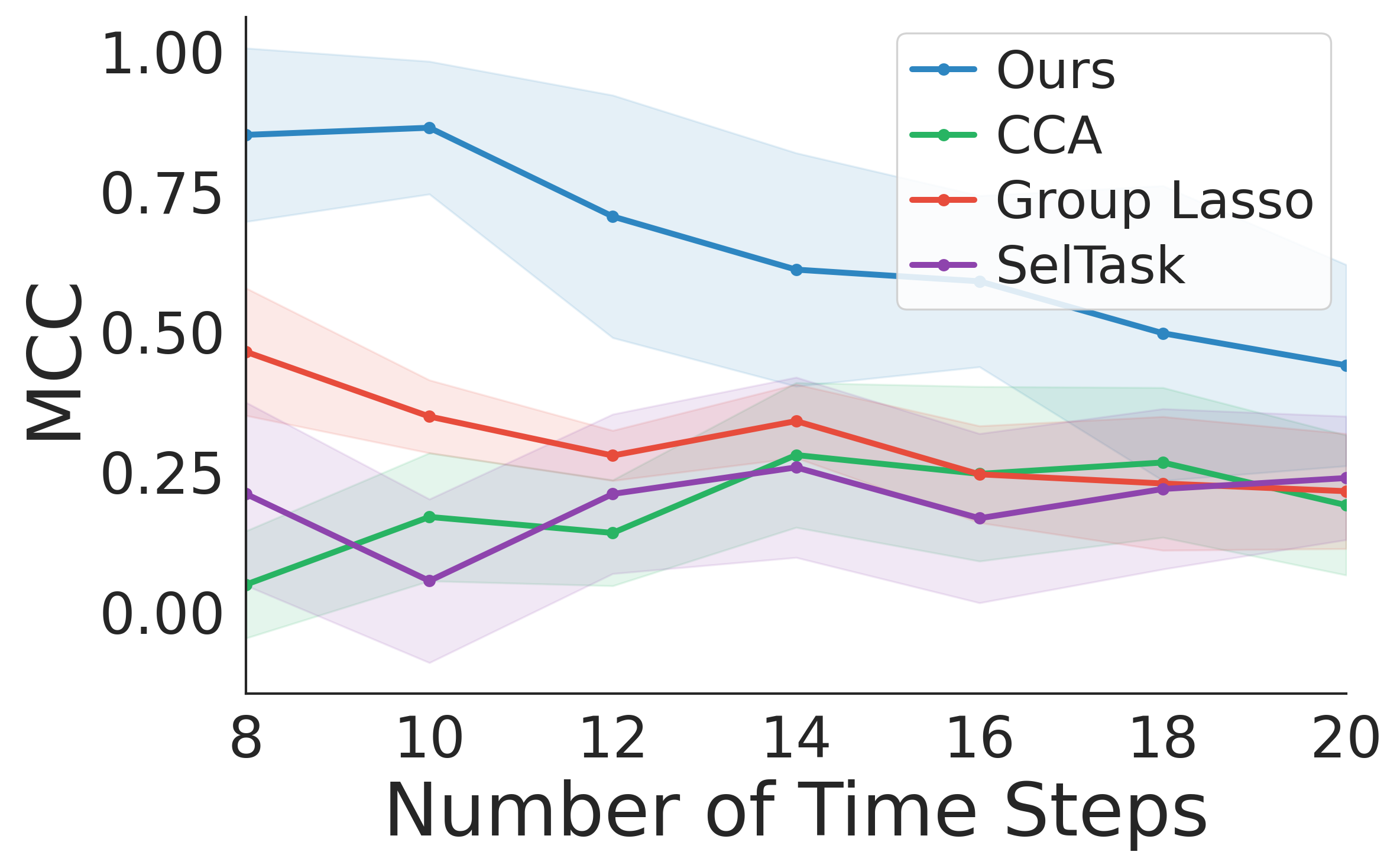}
    \end{subfigure}

    \vspace{0.3em}

    \begin{subfigure}[b]{0.4\textwidth}
        \centering
        \includegraphics[width=\linewidth]{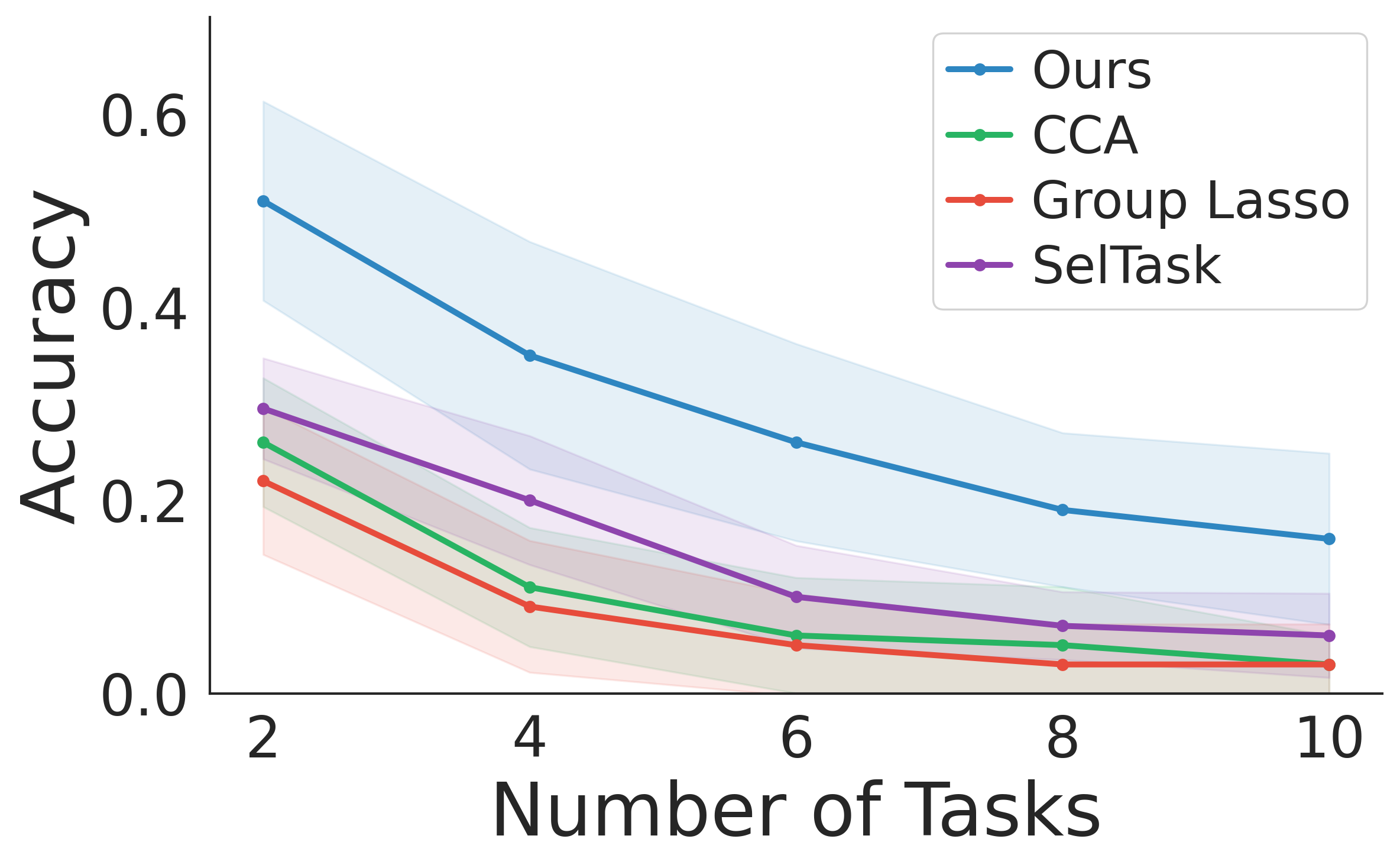}
    \end{subfigure}
    \hspace{1.0em}
    \begin{subfigure}[b]{0.4\textwidth}
        \centering
        \includegraphics[width=\linewidth]{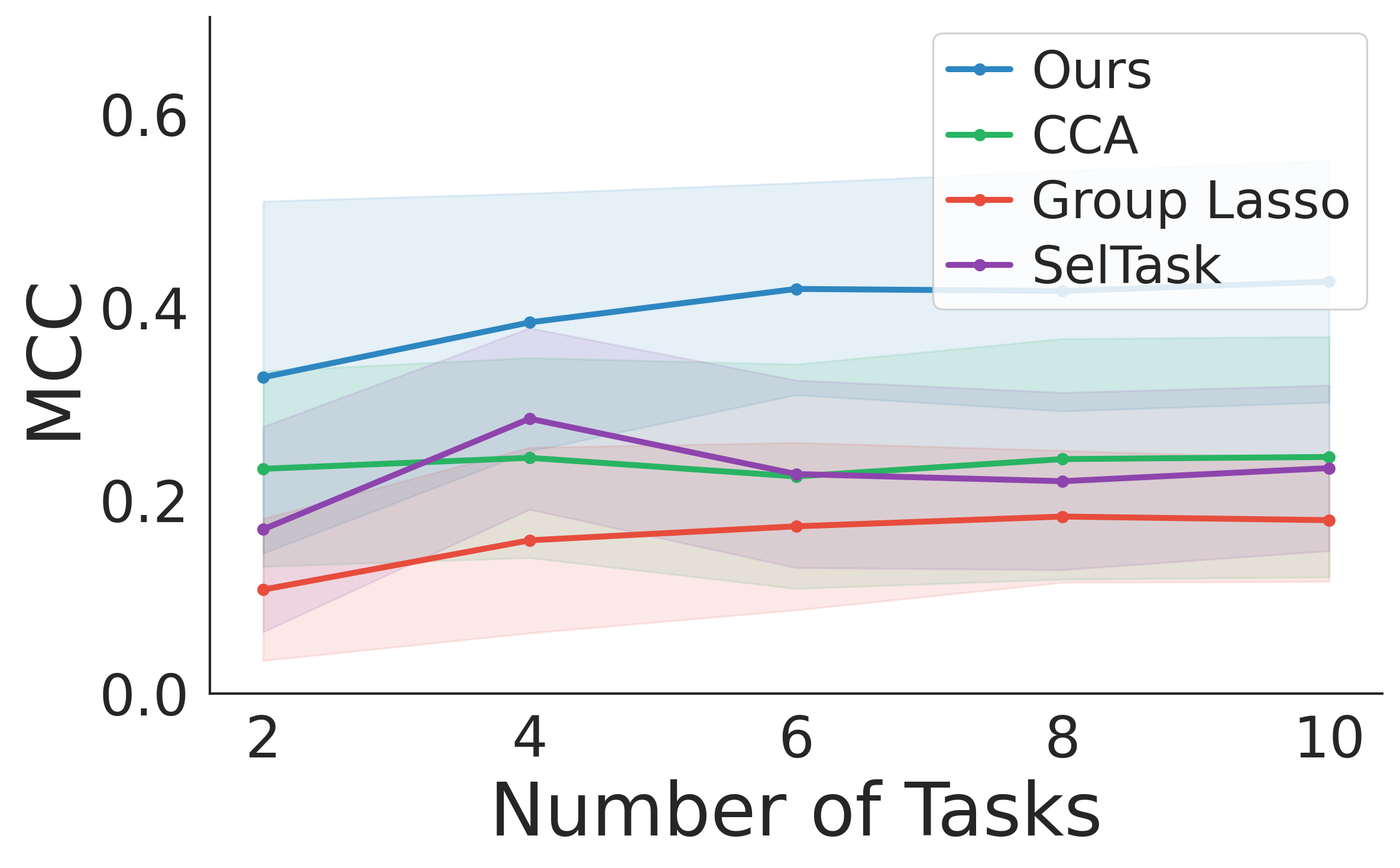}
    \end{subfigure}

    \caption{Temporal task structure identification.
Top: varying the number of time steps $T$ with $T/5$ tasks.
Bottom: varying the number of tasks $M$ with $20$ time steps.
Left: accuracy. Right: MCC.}
    \vspace{-1.2em}
    \label{fig:sync_struc}
\end{figure*}

\section{Experiments}
\label{sec:exp}
\vspace{-0.3em}
In this section, we present comprehensive empirical results supporting our theory on both temporal task structure learning and task-relevant representation learning across diverse settings. Due to page limits, some setup details are deferred to Appendix~\ref{appx:exp_setup}, and we have included many additional empirical results in Appendix~\ref{appx:exp_result}.
\vspace{-0.3em}


\paragraph{Identifiability of Temporal Task Structure.} We evaluate whether the proposed algorithm can recover temporal task structures under challenging conditions, including disconnected temporal relations and arbitrarily interleaving tasks. Two setups are considered: (1) varying the number of time steps $T$ from $8$ to $20$ with $T/5$ tasks, and (2) varying the number of tasks $M$ from $2$ to $10$ with $20$ time steps. To maximize structural complexity, we set the minimum segment length to $2$ and randomly generate the task–time step dependencies. Additionally, $20\%$ of the dependencies between consecutive segments are randomly removed. Each dataset contains $10,000$ samples generated from linear Gaussian SCMs. All results are from $10$ random runs with Fisher’s z-test \citep{fisher1921probable} with the p-value threshold as $0.05$.

For both settings, we report accuracy and Matthews correlation coefficient (MCC) of the tasks identified. For baselines, we consider classical CCA \citep{anderson2003introduction} and Group Lasso \citep{yuan2006model}, as well as the most recent SelTask \citep{qiu2024identifying}. Results are summarized in Fig.~\ref{fig:sync_struc}. Ours dominates across all $T$ and $M$ in both accuracy and MCC. As expected, performance degrades as the problem becomes harder (larger $T$ or $M$), but the gap persists.


\begin{figure}[t]
    \centering
    \includegraphics[width=0.4\textwidth]{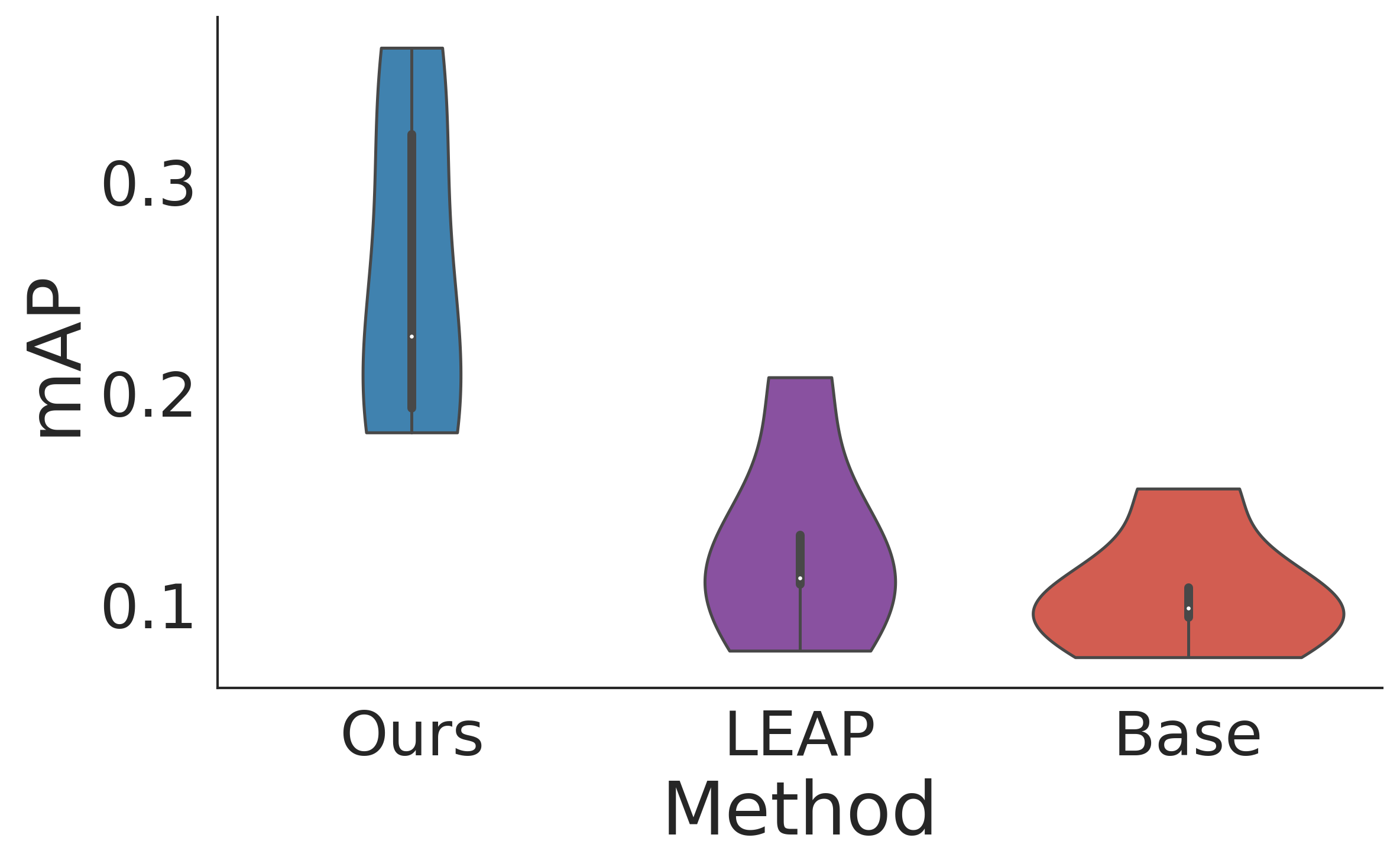}
    \vspace{-0.5em}
    \caption{Real-world temporal task structure discovery.}
    \vspace{-1.5em}
    \label{fig:map}
\end{figure}

\paragraph{Real-World Structure.} 
To explore whether we can identify real-world structures between tasks and time steps, we further conduct experiments on the recent SportsHHI video dataset \citep{wu2024sportshhi}. For each time frame in the video, the objective is to discover its corresponding task labels, which in this context correspond to the behaviors of humans captured in the video. Because the dataset involves multiple individuals with complex and overlapping interactions, each frame typically contains multiple task labels, resulting in highly intricate task structures. This makes it a challenging and suitable testbed for stress-testing the identification.

Following common practice, we use a pretrained CLIP encoder~\citep{radford2021learning} to obtain visual embeddings $\mathbf{o}$ and task embeddings $\mathbf{g}$, and employ a variational autoencoder to estimate the latent state variables $\mathbf{s}$. The latent transition dynamics between consecutive states are parameterized by an MLP, while conditional mutual information (CMI) is used as a proxy for conditional independence to mitigate the curse of dimensionality in statistical testing. We compare our approach against two baselines: (i) applying Alg.~\ref{alg:pairwise-ci-final} directly to observed variables instead of latent ones (replacing $\mathbf{s}$ with $\mathbf{o}$), and (ii) LEAP \citep{yao2021learning}, a representative method for learning latent temporal representations with identifiability guarantees on the latent variables but \textit{not} on the structure. Following prior work, we evaluate using mean average precision (mAP). The results in Fig.~\ref{fig:map} demonstrate that modeling the complexity of general temporal task structures is essential for accurate discovery in complex real-world scenarios. It might be worth noting that we have also compared with more standard video models that do not target identiability, of which the results are shown in Table \ref{tab:map_results} in Appendix \ref{appx:exp_result}.


\begin{figure}[t]
    \centering
    \includegraphics[width=0.43\textwidth]{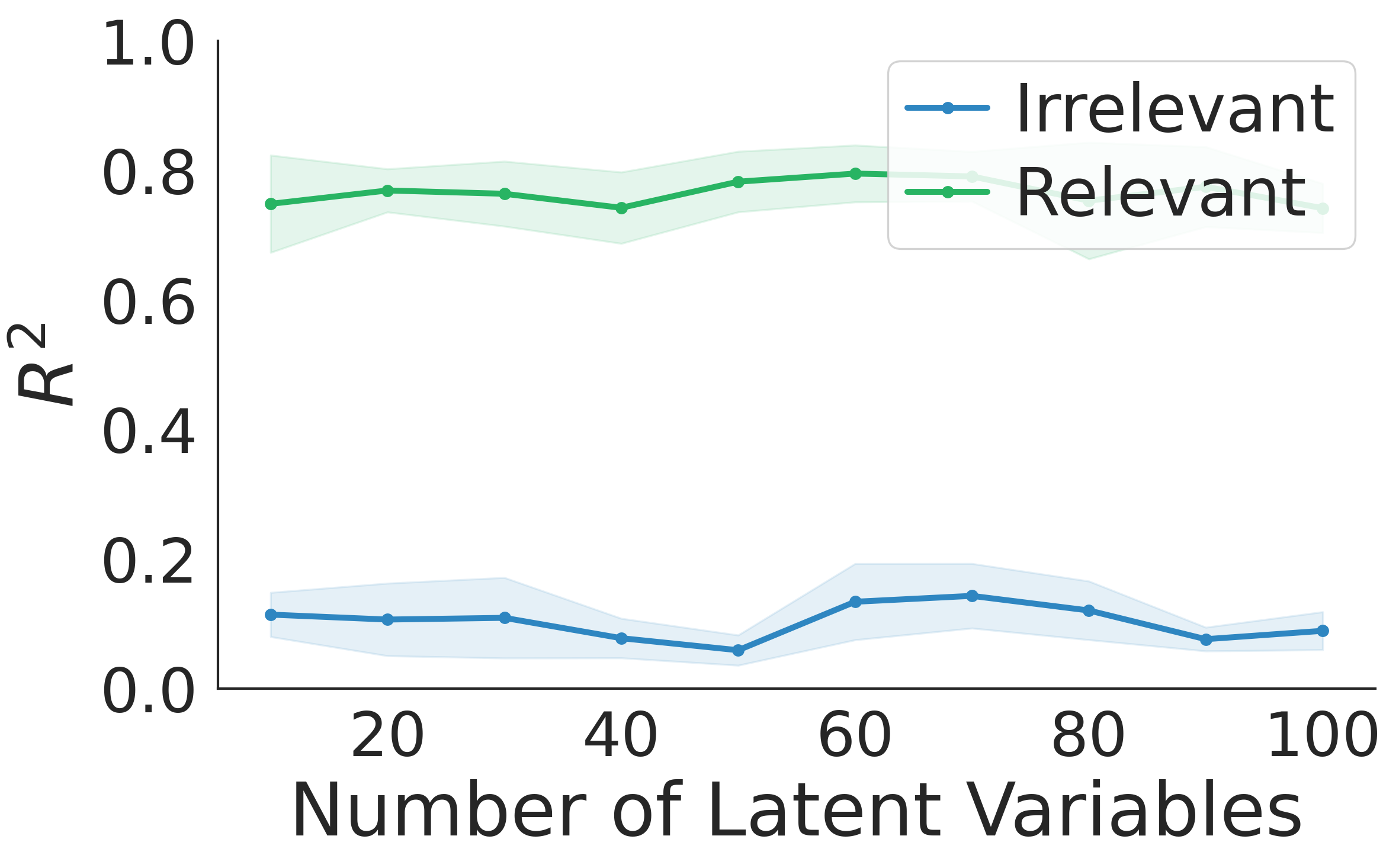}
    \vspace{-0.5em}
    \caption{$R^2$ for relevant and irrelevant parts.}
    \label{fig:r2}
    \vspace{-1.5em}
\end{figure}

\paragraph{Identifiability of Task-Relevant Representation.} 
After establishing identifiability of the temporal task structure, we zoom in on a single step and evaluate recovery of the task-relevant latent representation conditioned on the corresponding tasks. The data-generating process follows the theoretical setup, with an MLP using Leaky ReLU as the nonlinear function. For every dataset, we randomly select $1/5$ dimensions as task-relevant latent variables. For estimation, we employ a VAE with $\ell_1$ regularization on the task-latent structure. As the evaluation metric, we report the $R^2$ between the estimated and ground-truth latent components: higher values indicate accurate recovery of relevant parts, while lower values indicate effective separation from irrelevant parts. Figure~\ref{fig:r2} shows a clear gap: (1) task-relevant representations are successfully disentangled from irrelevant ones (low $R^2$ for irrelevant parts), and (2) the estimated task-relevant part captures most of the information in the ground-truth one (high $R^2$ for relevant parts). These provide rigorous validation of the identifiability theory, confirming that task-relevant latent variables can be uncovered as a group with both information preservation and irrelevance disentanglement in practice.

\paragraph{Task-Relevant Latents in Realistic Vision.}
We next investigate the recovery of task-relevant latent representations in realistic scenarios. Since ground-truth latents are usually unavailable in practice, direct comparison with the truth is infeasible. Evaluation therefore turns to human interpretability, where visualizing the identified latents provides key evidence. We construct a dataset of cat images using Flux, with tasks such as “wearing eyeglasses,” “wearing a hat,” and “wearing a tie,” explicitly considering realistic images to align with real-world vision. For estimation, we adopt a GAN-based generator where each task is associated with a learned transformation of the latents. Concretely, given $\mathbf{s}$, a task-specific operator modifies only a sparse subset of coordinates by an $\ell_1$-regularized mask, producing masked latents that are then passed to the generator. Figure~\ref{fig:cat} compares results with and without sparsity. With sparsity, the recovered latents correspond closely to the intended task attributes, while without sparsity, irrelevant factors such as color are entangled with the target tasks. This further supports the task-relevant identifiability and the role of sparsity. Additional results on more scenarios are included in Appendix~\ref{appx:exp_result} (e.g., Figure~\ref{fig:dog}), which further verify the claims.

\begin{figure}
    \centering
    With sparsity\\
    \includegraphics[scale=0.16]{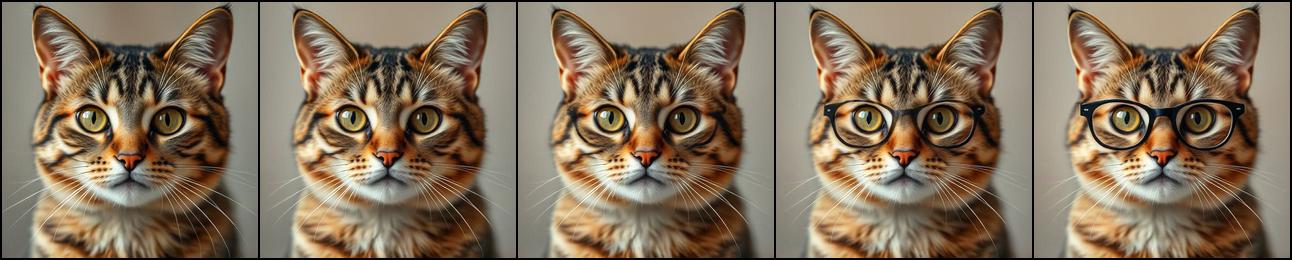}\\
    \includegraphics[scale=0.16]{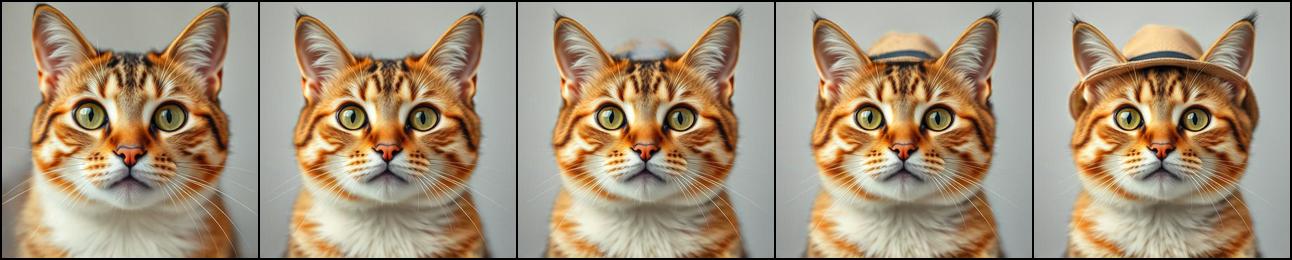}\\
    \includegraphics[scale=0.16]{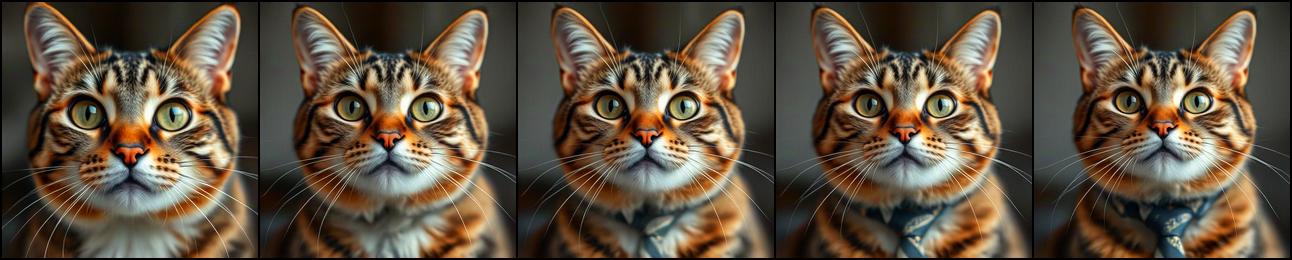}\\[0.5em]

    Without sparsity\\
    \includegraphics[scale=0.16]{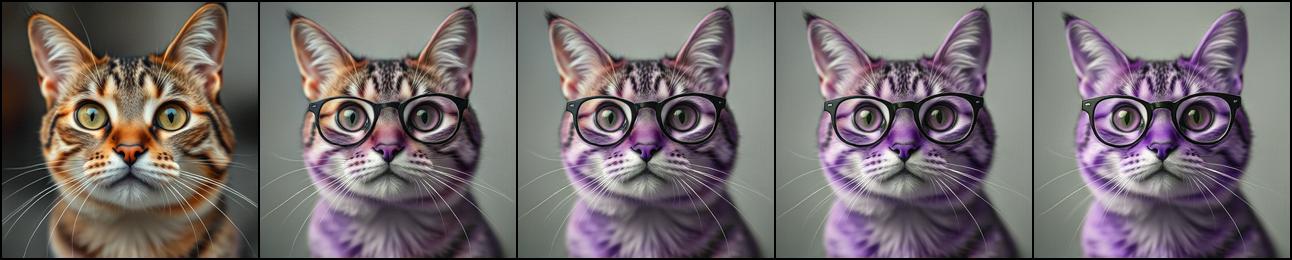}\\
    \includegraphics[scale=0.16]{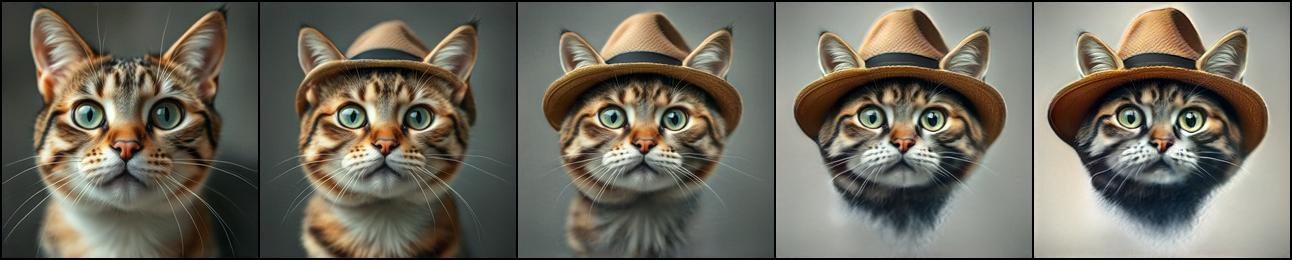}\\
    \includegraphics[scale=0.16]{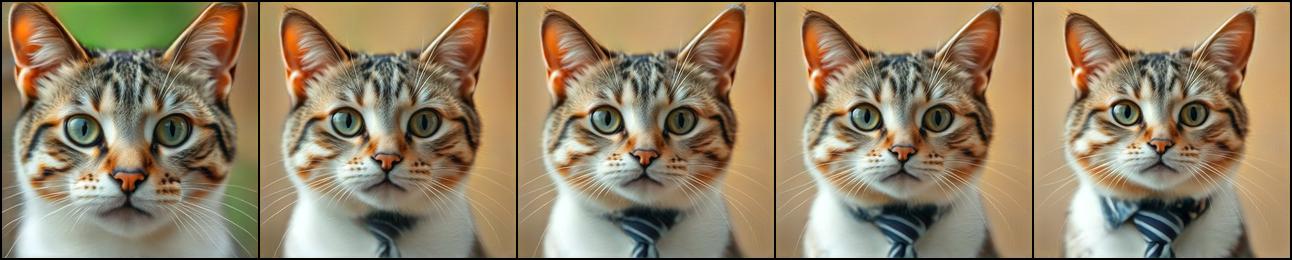}

    \caption{Qualitative comparison of identified task-relevant latents. Tasks include “wearing glasses,” “wearing a hat,” and “wearing a tie.” With sparsity, the model isolates a minimal but sufficient subset of latents aligned with each task. Without sparsity, irrelevant factors such as color become entangled with task-relevant ones.}
    \vspace{-2em}
    \label{fig:cat}
\end{figure}

\vspace{-0.5em}
\section{Conclusion and Discussion}
\vspace{-0.5em}
\looseness=-1
In this paper, we initiated the theoretical investigation of learning task-relevant world representations, aiming to move from generalist to specialist. The main challenges lie in the level of generality, which requires handling both complex structures, such as disconnected sequences, interleaving tasks, and frequent switches, and general processes, including nonlinear functions, arbitrary distributions, and the absence of auxiliary information. While we have addressed these, several related questions remain open. First, although identifiability is defined asymptotically and frontier models are often trained on web-scale data, it is still important to understand the finite-sample regime, and the lack of related analysis is a limitation in data-sparse scenarios. Second, our present way of leveraging identifiability is relatively simple, essentially a standard estimator with sparsity regularization. While such simplicity and universality are often advantageous, it is also intriguing to consider identifiability-inspired architectures that depart more radically from existing patterns. A stronger focus on identifiability within the community may reveal barrier-breaking insights that have been overshadowed by the pursuit of purely empirical gains, and we aim to contribute toward this shift.

\section*{Impact Statement}
This paper presents work whose goal is to advance the field of Machine
Learning. There are many potential societal consequences of our work, none
which we feel must be specifically highlighted here.

\bibliography{bibliography}
\bibliographystyle{icml2026}

\newpage
\appendix
\onecolumn

\addcontentsline{toc}{section}{Appendix}
\part{Appendices}

{\hypersetup{linkcolor=black}
\parttoc 
}

\newpage

\section{Proofs}\label{sec:proofs}

\subsection{Notation}

We first provide a summary of notation in Table \ref{tab:notation}.

\begin{table}[h]
\centering
\small
\renewcommand\arraystretch{1.1}
\setlength{\tabcolsep}{8pt}
\begin{tabular}{c|l}
\toprule
\textbf{Symbol} & \textbf{Meaning} \\
\midrule
$o_t \in \mathbb{R}^{d_o}$ & Observation at time $t$ \\
$s_t \in \mathbb{R}^{d_s}$ & Latent state at time $t$ \\
$a_t \in \mathbb{R}^{d_a}$ & Action at time $t$ \\
$g_i$ & Task variable $i$, defined as collider across time steps \\
$M$ & Total number of tasks \\
$T$ & Total number of time steps \\
$S_k$ & Segment $k$, a block of consecutive latent steps \\
$T(t)$ & Set of tasks relevant to time step $t$ \\
$T(S_k)$ & Set of tasks relevant to segment $S_k$ \\
$f_t$ & Observation function $s_t \mapsto o_t$, diffeomorphism onto image \\
$\pi_t$ & Action policy $s_t \mapsto a_t$ with noise $\eta_t$ \\
$F_t$ & Transition function for connected boundaries \\
$F_t'$ & Transition function for disconnected boundaries \\
$J_u(s_t)$ & Jacobian of mapping $u$ w.r.t. latent state $s_t$ \\
$I(A)$ & Index set of nonzero entries of matrix/vector $A$ \\
$\|I(A)\|$ & Cardinality of index set $I(A)$ (i.e., $\ell_0$ norm) \\
$I_k \subseteq [d_s]$ & Index set of latents relevant to task $g_k$ \\
$s_{t,I_k}$ & Latent variables in $s_t$ relevant to $g_k$ \\
\bottomrule
\end{tabular}
\caption{Summary of notation.}
\label{tab:notation}
\end{table}

\subsection{Proof of Theorem~\ref{thm:temporal}}\label{sec:proof-temp}

\temporal* 

\paragraph{Notation and Blocking Rules.}
A path is a sequence of distinct nodes $(v_0,\dots,v_r)$ with each consecutive pair adjacent. Along a path, a node is a collider if both incident path edges have arrowheads into the node, and a non-collider otherwise. A path is blocked by a conditioning set $\mathbf{Z}$ if it contains a non-collider in $\mathbf{Z}$ or a collider that is neither in $\mathbf{Z}$ nor has a descendant in $\mathbf{Z}$ (i.e., d-separation \citep{pearl1988probabilistic}). In our graph,
tasks have only incoming edges, and tasks have no descendants.

\paragraph{Band Conditioning Set.}
Throughout this subsection the conditioning set is
\begin{equation}\label{eq:Z-band-def}
\mathbf{Z}_{\mathrm{band}}(k,v,i)=\{\mathbf{s}_{kL-1},\mathbf{s}_{kL+1},\mathbf{s}_{vL-1},\mathbf{s}_{vL+1}\}\cap\{\mathbf{s}_1,\dots,\mathbf{s}_T\} \cup \{\mathbf{g}_i\},
\end{equation}
with out-of-range indices omitted. Thus only the two immediate inner neighbors $\mathbf{s}_{kL+1},\mathbf{s}_{vL-1}$ and the two immediate outer neighbors $\mathbf{s}_{kL-1},\mathbf{s}_{vL+1}$ (when they exist) are conditioned; among tasks only $\mathbf{g}_i$ is conditioned.

\begin{lemma}[A d-connecting path uses exactly one task, equal to $\mathbf{g}_i$]\label{lem:unique-task}
Every path from $\mathbf{s}_{kL}$ to $\mathbf{s}_{vL}$ that is d-connecting given $\mathbf{Z}_{\mathrm{band}}(k,v,i)$ contains exactly one task node, and that task is $\mathbf{g}_i$.
\end{lemma}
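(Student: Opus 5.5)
We prove the two halves of the claim separately: a d-connecting path from $\mathbf{s}_{kL}$ to $\mathbf{s}_{vL}$ given $\mathbf{Z}_{\mathrm{band}}(k,v,i)$ contains at least one task node, and every task node on it must be $\mathbf{g}_i$ and occur only once. Uniqueness of occurrence is automatic, since a path has distinct nodes. Throughout we use only the edge types of the graph: $\mathbf{s}_t\to\mathbf{o}_t$, $\mathbf{s}_t\to\mathbf{a}_t$, $\mathbf{a}_t\to\mathbf{s}_{t+1}$ and $\mathbf{s}_t\to\mathbf{s}_{t+1}$ (on connected boundaries), $\mathbf{a}_t\to\mathbf{g}_j$, and exogenous noises.

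\textbf{Step 1: any task on the path is $\mathbf{g}_i$.} Take a task node $\mathbf{g}_j$ on the path. It is not an endpoint, and tasks have only incoming edges, so both incident path edges point into it. Hence $\mathbf{g}_j$ is a collider on the path. Tasks have no descendants, so this collider is open only if $\mathbf{g}_j\in\mathbf{Z}_{\mathrm{band}}(k,v,i)$. The only task in the conditioning set is $\mathbf{g}_i$, so $j=i$.

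\textbf{Step 2: a path with no task node is blocked.} Deleting task nodes leaves a graph in which $\mathbf{s}$- and $\mathbf{a}$-nodes form a temporal chain, with $\mathbf{o}_t$ and noises as leaves or roots.
\begin{itemize}
\item Observations $\mathbf{o}_t$ are sinks. Any path through one has it as a collider, and it is unconditioned with no descendants, so the path is blocked.
\item A noise node $\eta_t$ or $\xi_t$ is a root with a single child, so it cannot be an interior node of a path.
\item Every remaining edge goes forward in time, from time $t$ to time $t$ or $t+1$. Starting at $\mathbf{s}_{kL}$ and eventually reaching $\mathbf{s}_{vL}$ with $v>k$, the path must leave the local band $\{\mathbf{s}_{kL},\mathbf{a}_{kL}\}$.
\end{itemize}

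We split on the direction in which the path leaves $\mathbf{s}_{kL}$.
\begin{itemize}
\item \emph{Forward.} The path reaches $\mathbf{s}_{kL+1}$ either directly or via $\mathbf{a}_{kL}\to\mathbf{s}_{kL+1}$. If it continues forward from $\mathbf{s}_{kL+1}$, that node is a non-collider in $\mathbf{Z}$ and the path is blocked. If it continues backward from $\mathbf{s}_{kL+1}$, the only incoming edges are from $\mathbf{s}_{kL}$ and $\mathbf{a}_{kL}$, so it must return into the band of $\mathbf{s}_{kL}$. The only way out of that band other than through $\mathbf{s}_{kL+1}$ is backward through $\mathbf{s}_{kL-1}$ or $\mathbf{a}_{kL-1}$.
\item \emph{Backward.} Leaving backward, through $\mathbf{s}_{kL-1}$ or through $\mathbf{a}_{kL-1}$, the path moves into the past. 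Every route from the past to the future passes through an $\mathbf{s}$-node. The only such nodes adjacent to the band are $\mathbf{s}_{kL\pm1}$, and the $\mathbf{s}_{kL-1}$ route meets a conditioned non-collider.
\item \emph{The $\mathbf{a}_{kL-1}$ route.} The path can come back to the future only through $\mathbf{s}_{kL-1}$ or through $\mathbf{s}_{kL}$. The latter is impossible, since $\mathbf{s}_{kL}$ is an endpoint and nodes are distinct. So this route also meets $\mathbf{s}_{kL-1}\in\mathbf{Z}$. On the chain, $\mathbf{s}_{kL-1}$ is a fork or chain node, never a collider, because its parents lie further back in time.
\end{itemize}
A symmetric argument applies at the $\mathbf{s}_{vL}$ end with $\mathbf{s}_{vL\pm1}$.

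\textbf{Step 3: handling conditioned colliders.} Some nodes of $\mathbf{Z}$ can be colliders with an open status, for example $\mathbf{s}_{kL+1}$ with parents $\mathbf{s}_{kL}$ and $\mathbf{a}_{kL}$. Such a collider still connects only nodes in the band $t=kL$. Descendants of conditioned nodes could open other colliders, but without task nodes every collider is an $\mathbf{s}_{t+1}$ whose parents both lie at time $t$. Its descendants lie in the future, so an open collider is either conditioned or an ancestor of a conditioned state. An ancestor of $\mathbf{s}_{vL\pm1}$ through a chain lies before $vL$, and a trace of the resulting path again must pass through $\mathbf{s}_{kL+1}$ or $\mathbf{s}_{vL-1}$ as a non-collider.

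We expect this step, ruling out paths made d-connecting by colliders whose descendants lie in $\mathbf{Z}$ (inner neighbors, disconnected boundaries, and $L=2$ overlaps), to be the main obstacle. We would treat it by a short case analysis on the first and last $\mathbf{s}$-node of the path outside $\{\mathbf{s}_{kL},\mathbf{s}_{vL}\}$. Combined with Step 1, Step 2 yields exactly one task node, namely $\mathbf{g}_i$.
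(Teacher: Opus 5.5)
Your proposal is correct and follows the paper's proof. Task-free paths are blocked at the conditioned states $\mathbf{s}_{kL\pm 1}$, which act as non-colliders; any task other than $\mathbf{g}_i$ is an unconditioned collider with no descendants; and distinctness of path nodes gives uniqueness. Your Step 3 is unnecessary, because a task-free path between two state nodes contains no colliders at all: a collider $\mathbf{s}_{t+1}$ would need both $\mathbf{s}_t$ and $\mathbf{a}_t$ as path neighbours, and $\mathbf{a}_t$ has no further task-free neighbour. So your exhaustive exit analysis in Step 2 already completes the argument.
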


\begin{proof}
Consider any path with no task nodes. Such a path alternates among states and actions and moves in time via $\mathbf{s}_t\to s_{t+1}$, $\mathbf{s}_t\to a_t$ or $a_t \to s_{t+1}$. Any forward traversal from $\mathbf{s}_{kL}$ toward $\mathbf{s}_{vL}$ must pass through the cut state $\mathbf{s}_{kL+1}$; symmetrically, any approach into $\mathbf{s}_{vL}$ from the left must pass through $\mathbf{s}_{vL-1}$. All these cut states are in $\mathbf{Z}_{\mathrm{band}}$ and are non-colliders on such chain paths, so the path is blocked. Hence, any d-connected path must include at least one task.

If a path contains a task $g_j\ne \mathbf{g}_i$, then at $g_j$ both incident edges point into $g_j$, so $g_j$ is a collider. Since $g_j\notin \mathbf{Z}_{\mathrm{band}}$ and tasks have no descendants, this collider is closed and the path is blocked. Therefore no d-connecting path can contain any task other than $\mathbf{g}_i$.

If a path contains two or more tasks, at least one of them is not $\mathbf{g}_i$, which blocks the path by the previous argument. Thus every d-connecting path contains exactly one task and that task is $\mathbf{g}_i$.
\end{proof}

\begin{lemma}[Local structure of d-connecting paths]\label{lem:four-forms-band}
Under the graph and conditioning in \Eqref{eq:Z-band-def}, every d-connecting path between $\mathbf{s}_{kL}$ and $\mathbf{s}_{vL}$ has one of the four forms
\begin{align}
& \text{(I)}\quad \mathbf{s}_{kL}\to \mathbf{a}_{kL}\to \mathbf{g}_i\leftarrow \mathbf{a}_{vL}\leftarrow \mathbf{s}_{vL},\qquad
\text{(II)}\quad \mathbf{s}_{kL}\to \mathbf{a}_{kL}\to \mathbf{g}_i\leftarrow \mathbf{a}_{vL-1}\to \mathbf{s}_{vL},\nonumber\\
& \text{(III)}\quad \mathbf{s}_{kL}\leftarrow \mathbf{a}_{kL-1}\to \mathbf{g}_i\leftarrow \mathbf{a}_{vL}\leftarrow \mathbf{s}_{vL},\qquad
\text{(IV)}\quad \mathbf{s}_{kL}\leftarrow \mathbf{a}_{kL-1}\to \mathbf{g}_i\leftarrow \mathbf{a}_{vL-1}\to \mathbf{s}_{vL}, \label{eq:four-forms-band}
\end{align}
with out-of-range indices omitted.
\end{lemma}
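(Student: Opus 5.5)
The plan is to start from Lemma~\ref{lem:unique-task}. Any d-connecting path $P$ from $\mathbf{s}_{kL}$ to $\mathbf{s}_{vL}$ given $\mathbf{Z}_{\mathrm{band}}(k,v,i)$ contains exactly one task node, namely $\mathbf{g}_i$. Tasks have only incoming edges from actions, so on $P$ the node $\mathbf{g}_i$ is a collider $\mathbf{a}_t\to\mathbf{g}_i\leftarrow\mathbf{a}_{t'}$. It is open because $\mathbf{g}_i\in\mathbf{Z}_{\mathrm{band}}$. Hence $P$ splits as
\[
P = P_1\,(\mathbf{s}_{kL}\rightsquigarrow\mathbf{a}_t)\;\to\mathbf{g}_i\leftarrow\;P_2\,(\mathbf{a}_{t'}\rightsquigarrow\mathbf{s}_{vL}),
\]
where $P_1$ and $P_2$ contain no task node and therefore live entirely in the state--action chain. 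By the symmetry of the band set, it suffices to show that $P_1$ is the single edge $\mathbf{s}_{kL}\to\mathbf{a}_{kL}$ or $\mathbf{s}_{kL}\leftarrow\mathbf{a}_{kL-1}$. The mirror argument then gives $\mathbf{a}_{vL}\leftarrow\mathbf{s}_{vL}$ or $\mathbf{a}_{vL-1}\to\mathbf{s}_{vL}$ for $P_2$, and combining the two yields forms (I)--(IV).

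For $P_1$ I would do a case analysis on the first edge out of $\mathbf{s}_{kL}$. Its neighbours in the chain are $\mathbf{a}_{kL}$, $\mathbf{s}_{kL+1}$, and, if the boundary $kL-1\to kL$ is connected, $\mathbf{s}_{kL-1}$ and $\mathbf{a}_{kL-1}$.

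\emph{Through $\mathbf{s}_{kL-1}$.} We have $\mathbf{s}_{kL-1}\in\mathbf{Z}$, and the edge is $\mathbf{s}_{kL-1}\to\mathbf{s}_{kL}$, so $\mathbf{s}_{kL-1}$ cannot be a collider on $P$. It is a conditioned non-collider, and the path is blocked.

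\emph{Through $\mathbf{a}_{kL}$ or $\mathbf{a}_{kL-1}$, not stopping at $\mathbf{g}_i$.} Such an action is an unconditioned non-collider. Its only remaining chain neighbours are $\mathbf{s}_{kL+1}$ (from $\mathbf{a}_{kL}$) and, from $\mathbf{a}_{kL-1}$, its parent $\mathbf{s}_{kL-1}$ and the state $\mathbf{s}_{kL}$ already used. Entering $\mathbf{s}_{kL-1}$ as its child makes $\mathbf{s}_{kL-1}$ a conditioned non-collider, which blocks. Entering $\mathbf{s}_{kL+1}$ from $\mathbf{a}_{kL}$ gives the pattern $\mathbf{s}_{kL}\to\mathbf{a}_{kL}\to\mathbf{s}_{kL+1}$, after which $P_1$ must still reach some $\mathbf{a}_t$. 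Leaving $\mathbf{s}_{kL+1}$ toward later nodes along an outgoing edge makes it a conditioned non-collider, so this is blocked as well. Hence if $P_1$ starts with an action edge, that action must be adjacent to $\mathbf{g}_i$ and $P_1$ is a single edge.

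\emph{Through $\mathbf{s}_{kL+1}$.} This is the case I expect to be the main obstacle. Since $\mathbf{s}_{kL+1}\in\mathbf{Z}$, it must be an open collider: $\mathbf{s}_{kL}\to\mathbf{s}_{kL+1}\leftarrow\mathbf{a}_{kL}$, continuing from $\mathbf{a}_{kL}$. The route $\mathbf{s}_{kL}\to\mathbf{s}_{kL+1}\leftarrow\mathbf{a}_{kL}\to\mathbf{g}_i$ does not seem blockable by d-separation rules alone. My plan is to show that $\mathbf{a}_{kL}$ has no other available neighbours: its parent $\mathbf{s}_{kL}$ is the endpoint, and its only other children are tasks. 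So any such path is a detour that still terminates in the edge $\mathbf{a}_{kL}\to\mathbf{g}_i$.

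The lemma should therefore be read, or proved, modulo this collider detour through $\mathbf{s}_{kL+1}$ (respectively $\mathbf{s}_{vL-1}$ on the other side). The invariant actually needed for Theorem~\ref{thm:temporal} still holds: an action at time $kL$ or $kL-1$ feeds into $\mathbf{g}_i$. I would also double-check the possibility of colliders at unconditioned states whose descendants lie in $\mathbf{Z}$. Such a state $\mathbf{s}_j$ with $j<kL+1$ in a connected stretch has $\mathbf{s}_{kL+1}$ as a descendant. I would rule these out by noting that reaching such an $\mathbf{s}_j$ from $\mathbf{s}_{kL}$ already requires passing a conditioned non-collider at $\mathbf{s}_{kL-1}$ or an unconditioned action whose continuation was excluded above.
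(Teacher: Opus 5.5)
Your route is the paper's (Lemma~\ref{lem:unique-task}, split the path at $\mathbf{g}_i$, classify what can sit next to each endpoint), and your objection to the literal statement is correct.

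\textbf{Where the paper's proof fails.} The paper dismisses both conditioned state neighbours of $\mathbf{s}_{kL}$ as ``non-colliders on chain moves''. At the right end it only examines the out-edges $\mathbf{s}_{vL+1}\to\mathbf{a}_{vL+1}$ and $\mathbf{s}_{vL+1}\to\mathbf{s}_{vL+2}$. It therefore overlooks that a conditioned \emph{child} of an endpoint can lie on the path as an open collider.

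You can drop the hedge ``does not seem blockable''. Suppose $\mathbf{g}_i$ is relevant to $\mathbf{S}_k$ and $\mathbf{S}_v$, so that $\mathbf{a}_{kL}\to\mathbf{g}_i$ and $\mathbf{a}_{vL}\to\mathbf{g}_i$ by segment homogeneity, and the boundary $kL\to kL+1$ is connected. Then the path
\[
\mathbf{s}_{kL}\to\mathbf{s}_{kL+1}\leftarrow\mathbf{a}_{kL}\to\mathbf{g}_i\leftarrow\mathbf{a}_{vL}\leftarrow\mathbf{s}_{vL}
\]
has both colliders ($\mathbf{s}_{kL+1}$, $\mathbf{g}_i$) in $\mathbf{Z}_{\mathrm{band}}(k,v,i)$ and both non-colliders ($\mathbf{a}_{kL}$, $\mathbf{a}_{vL}$) outside it. It is d-connecting and is none of (I)--(IV). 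The lemma as stated is therefore false and cannot be proved. What your analysis establishes is a corrected classification, and that is all Theorem~\ref{thm:temporal} actually uses.

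\textbf{The error in your write-up: where the second detour sits.} The neighbourhood of $\mathbf{s}_{vL}$ is a translate of that of $\mathbf{s}_{kL}$ (conditioned parent $\mathbf{s}_{vL-1}$, conditioned child $\mathbf{s}_{vL+1}$), not its time reversal. So you should rerun your three cases at $\mathbf{s}_{vL}$ rather than reflect them.

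Since $\mathbf{s}_{vL-1}$ is a parent of $\mathbf{s}_{vL}$, it can never be a collider adjacent to it. The genuine detour is $\mathbf{g}_i\leftarrow\mathbf{a}_{vL}\to\mathbf{s}_{vL+1}\leftarrow\mathbf{s}_{vL}$, present when $vL\to vL+1$ is connected. Each end thus admits exactly three fragments, giving up to nine d-connecting forms. Every one of them contains a parent of $\mathbf{g}_i$ in $\{\mathbf{a}_{kL-1},\mathbf{a}_{kL}\}$ and one in $\{\mathbf{a}_{vL-1},\mathbf{a}_{vL}\}$, i.e.\ actions of $\mathbf{S}_k$ and $\mathbf{S}_v$ (using $L\ge 2$). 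That is exactly what the ($\Rightarrow$) direction needs.

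\textbf{Your closing worry is moot.} The states $\mathbf{s}_{kL\pm1}$ and $\mathbf{s}_{vL\pm1}$ are cut vertices of the task-free skeleton. They can only be crossed as non-colliders: entered from a parent and left via a child, or conversely. Hence your exhaustive endpoint case analysis never reaches any unconditioned state, and the descendant clause of d-separation never applies.
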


\begin{proof}
By Lemma~\ref{lem:unique-task}, any d-connecting path contains exactly the single task $\mathbf{g}_i$.

\emph{Left boundary.} The first neighbor of $\mathbf{s}_{kL}$ on any d-connecting path cannot be a state, because the only state neighbors are $\mathbf{s}_{kL-1}$ and $\mathbf{s}_{kL+1}$, both in $\mathbf{Z}_{\mathrm{band}}$ and both non-colliders on chain moves, which would block the path. Hence the neighbor must be an adjacent action, $\mathbf{a}_{kL-1}$ (if $kL>1$) or $\mathbf{a}_{kL}$. From that action, any continuation to a state would encounter one of the conditioned cut states as a non-collider, so the next node must be $\mathbf{g}_i$ via an edge $a\to \mathbf{g}_i$. This yields the two left fragments $\mathbf{s}_{kL}\leftarrow \mathbf{a}_{kL-1}\to \mathbf{g}_i$ and $\mathbf{s}_{kL}\to \mathbf{a}_{kL}\to \mathbf{g}_i$.

\emph{Right boundary.} Symmetrically, the predecessor of $\mathbf{s}_{vL}$ on the path cannot be a state, since the only state neighbors are $\mathbf{s}_{vL-1}$ and $\mathbf{s}_{vL+1}$, which are in $\mathbf{Z}_{\mathrm{band}}$ and would block as non-colliders in the potential additional paths. Specifically, $\mathbf{s}_{vL-1}$ is a non-collider in paths involving $\mathbf{s}_{vL-1} \to \mathbf{s}_{vL}$, and $\mathbf{s}_{vL+1}$ is a non-collider in paths involving $\mathbf{s}_{vL+1} \to \mathbf{a}_{vL+1}$ or $\mathbf{s}_{vL+1} \to \mathbf{s}_{vL+2}$. Thus the predecessor must be $\mathbf{a}_{vL-1}$ or $\mathbf{a}_{vL}$, linked to $\mathbf{g}_i$ by an edge $\mathbf{a} \to \mathbf{g}_i$ traversed in reverse on the path. This yields the two right fragments $\mathbf{g}_i\leftarrow \mathbf{a}_{vL-1}\to \mathbf{s}_{vL}$ and $\mathbf{g}_i\leftarrow \mathbf{a}_{vL}\leftarrow \mathbf{s}_{vL}$.

Combining the two left with the two right fragments gives exactly the four forms in \Eqref{eq:four-forms-band}. On each such path, $\mathbf{g}_i$ is the unique collider and is in $\mathbf{Z}_{\mathrm{band}}$, while all other nodes are non-colliders that are not in $\mathbf{Z}_{\mathrm{band}}$, so these paths are d-connecting.
\end{proof}

Now we are ready to prove the theorem.

\temporal* 

\begin{proof}
$(\Rightarrow)$ Suppose $\mathbf{s}_{kL}$ and $\mathbf{s}_{vL}$ are conditionally dependent given $\mathbf{Z}_{\mathrm{band}}(k,v,i)$. By Lemma~\ref{lem:four-forms-band}, there exists a d-connecting path of one of the four forms in \Eqref{eq:four-forms-band}. In each form, the actions adjacent to $\mathbf{s}_{kL}$ and $\mathbf{s}_{vL}$ that appear on the path are parents of $\mathbf{g}_i$. Hence $\mathbf{g}_i$ is relevant to segments $\mathbf{S}_k$ and $\mathbf{S}_v$.

$(\Leftarrow)$ Conversely, suppose both intersections are nonempty. Choose $p\in\{kL-1,kL\}$ and $q\in\{vL-1,vL\}$ such that $\mathbf{a}_p\to \mathbf{g}_i$ and $\mathbf{a}_q\to \mathbf{g}_i$. Then one of the four forms in \Eqref{eq:four-forms-band} exists. Along that path, $\mathbf{g}_i$ is the unique collider and is conditioned, while all other nodes are non-colliders not in $\mathbf{Z}_{\mathrm{band}}(k,v,i)$. Therefore the path is not blocked and
\begin{equation}
\mathbf{s}_{kL}\notindep \mathbf{s}_{vL}\ \Big|\ \mathbf{Z}_{\mathrm{band}}(k,v,i).
\end{equation}
This proves the equivalence stated in Theorem~\ref{thm:temporal}.
\end{proof}

\subsection{Proof of Corollary~\ref{cor:temporal_alt}}\label{sec:proof-temp_alt}

\temporalalt* 

\begin{proof}
Fix $k<v$, pick any $j \in \{(k-1)L+1,\dots,kL\}$ and $q \in \{(v-1)L+1,\dots,vL\}$, and define the local band set
\begin{equation}
\mathbf{Z}_{\mathrm{loc}}(j,q,i)=\{\mathbf{s}_{j-1},\mathbf{s}_{j+1},\mathbf{s}_{q-1},\mathbf{s}_{q+1}\}\cap\{\mathbf{s}_{1},\dots,\mathbf{s}_{T}\}\ \cup\ {\mathbf{g}_i}.
\end{equation}

By the same blocking argument as in Lemma~\ref{lem:unique-task}, any d-connecting path between $\mathbf{s}_{j}$ and $\mathbf{s}_{q}$ given $\mathbf{Z}_{\mathrm{loc}}(j,q,i)$ must contain exactly one task node and it must be $\mathbf{g}_i$. The neighbor of $\mathbf{s}_{j}$ on any such path cannot be a state, since $\mathbf{s}_{j-1}$ and $\mathbf{s}_{j+1}$ are in $\mathbf{Z}_{\mathrm{loc}}$ and are non-colliders on chain moves, hence they would block. Therefore the path must leave $\mathbf{s}_{j}$ through an adjacent action $\mathbf{a}_{j-1}$ or $\mathbf{a}_j$, and from there enter $\mathbf{g}_i$ via an edge $\mathbf{a} \to \mathbf{g}_i$. A symmetric argument holds at the right end near $\mathbf{s}_{q}$. Consequently every d-connecting path between $\mathbf{s}_{j}$ and $\mathbf{s}_{q}$ given $\mathbf{Z}_{\mathrm{loc}}(j,q,i)$ has one of the four forms
\begin{align}
& \text{(I)}\ \mathbf{s}_j\to \mathbf{a}_j\to \mathbf{g}_i\leftarrow \mathbf{a}_q\leftarrow \mathbf{s}_q,\qquad
\text{(II)}\ \mathbf{s}_j\to \mathbf{a}_j\to \mathbf{g}_i\leftarrow \mathbf{a}_{q-1}\to \mathbf{s}_q,\nonumber\ \\
& \text{(III)}\ \mathbf{s}_j\leftarrow \mathbf{a}_{j-1}\to \mathbf{g}_i\leftarrow \mathbf{a}_q\leftarrow \mathbf{s}_q,\qquad
\text{(IV)}\ \mathbf{s}_j\leftarrow \mathbf{a}_{j-1}\to \mathbf{g}_i\leftarrow \mathbf{a}_{q-1}\to \mathbf{s}_q, \label{eq:four-forms-local}
\end{align}
with out-of-range indices omitted. On each such path $\mathbf{g}_i$ is the unique collider and is conditioned, while all other nodes are non-colliders that are not conditioned, so the path is d-connecting.

Note that states inside the same segment share the same task set, and task nodes have only incoming edges from actions. It follows that, if $\mathbf{g}_i$ is relevant to $\mathbf{S}_k$ and $\mathbf{S}_v$ then there exist $p\in{j-1,j}$ and $r\in{q-1,q}$ such that $\mathbf{a}_p\to \mathbf{g}_i$ and $\mathbf{a}_r\to \mathbf{g}_i$.

Then we prove the equivalence as follows.

$(\Rightarrow)$ If $\mathbf{g}_i$ is relevant to $\mathbf{S}_k$ and $\mathbf{S}_v$, pick $p\in{j-1,j}$ and $r\in{q-1,q}$ with $\mathbf{a}_p\to \mathbf{g}_i$ and $\mathbf{a}_r\to \mathbf{g}_i$ as above. Then one of the four local forms in \Eqref{eq:four-forms-local} exists and is d-connecting given $\mathbf{Z}_{\mathrm{loc}}(j,q,i)$, hence
\begin{equation}
\mathbf{s}_{j} \notindep \mathbf{s}_{q}\ \Big|\ \mathbf{Z}_{\mathrm{loc}}(j,q,i).
\end{equation}

$(\Leftarrow)$ Conversely, if $\mathbf{s}_{j}$ and $\mathbf{s}_{q}$ are conditionally dependent given $\mathbf{Z}_{\mathrm{loc}}(j,q,i)$, then by \Eqref{eq:four-forms-local} the actions adjacent to $\mathbf{s}_{j}$ and $\mathbf{s}_{q}$ that lie on a d-connecting path are parents of $\mathbf{g}_i$. Together with the segment homogeneity, $\mathbf{g}_i$ is relevant to $\mathbf{S}_k$ and $\mathbf{S}_v$.

This proves the stated equivalence for arbitrary $j\in \mathbf{S}_k$ and $q\in \mathbf{S}_v$ with $L>2$.
\end{proof}

\subsection{Proof of Proposition~\ref{prop:global}}\label{sec:proof-glob}

\globalcorrectness* 

\begin{proof}
Fix a task $\mathbf{g}_i$ and a segment $\mathbf{S}_k$. If $\mathbf{S}_k$ truly contains $\mathbf{g}_i$, then for $\mathbf{S}_v$ with $v\ne k$ that also contains $\mathbf{g}_i$, by Theorem~\ref{thm:temporal}, there must be
\begin{equation}
\mathbf{s}_{kL}\notindep \mathbf{s}_{vL}\ \Big|\ \mathbf{Z}_{\mathrm{band}}(k,v,i).
\end{equation}
The oracle CI test returns dependence, so Algorithm~\ref{alg:pairwise-ci-final} adds $\mathbf{g}_i$ to both $\mathcal{T}(\mathbf{S}_k)$ and $\mathcal{T}(\mathbf{S}_v)$.

Conversely, if $\mathcal{T}(\mathbf{S}_k)$ does not contain $\mathbf{g}_i$, then Theorem~\ref{thm:temporal} implies conditional independence for all pairs involving $\mathbf{S}_k$, hence the algorithm never adds $\mathbf{g}_i$ to $\mathcal{T}(\mathbf{S}_k)$. Therefore the recovered segment–task incidence is exact. Per-step labels are correct by assignment.
\end{proof}

\subsection{Proof of Proposition~\ref{prop:inv}}\label{sec:proof-inv}

\inv* 

\begin{proof}
Since $\mathbf{o}_t = f_t(\mathbf{s}_t)$ and $o = \hat{f}_t(\hat{\mathbf{s}}_t)$ are observationally equivalent, there exists an invertible mapping $\phi$ such that
\begin{equation}
    \hat{\mathbf{s}}_t = \hat{f}_t^{-1}\circ f_t(\mathbf{s}_t) = \phi(\mathbf{s}_t),
\end{equation}
with inverse $\phi^{-1}$. By the chain rule,
\begin{equation} \label{eq:chain_inv}
    J_{\hat{u}} \;=\; J_u \, J_{\phi^{-1}}.
\end{equation}

Fix $i \in [d_g]$. Consider the set $\mathcal{N}_i$ of $\|\mathcal{I}((J_u)_{i,\cdot})\|$ distinct points and the corresponding Jacobian row vectors
\begin{equation} \label{eq:jacobians_inv}
    \left( \frac{\partial u_i}{\partial \mathbf{s}_{t,1}}, \ldots, \frac{\partial u_i}{\partial s_{t,d_s}} \right) \Big|_{\mathbf{s}_t=\mathbf{s}_t^{(l)}}, 
    \quad l \in \mathcal{N}_i,
\end{equation}
which are linearly independent by assumption.  

Now construct a matrix $M$ with $\mathcal{I}(M) = \mathcal{I}(J_{\phi^{-1}}(\hat{\mathbf{s}}))$.  
By the index-set inclusion assumption, for each $l \in \mathcal{N}_i$ we have
\begin{equation}
    \mathcal{I}\big((J_u(\mathbf{s}_t^{(l)}) M)_{i,\cdot}\big) 
    \;\subseteq\; \mathcal{I}\big((J_{\hat{u}}(\hat{\mathbf{s}}_t^{(l)}))_{i,\cdot}\big).
\end{equation}
Thus,
\begin{equation}
   (J_u(\mathbf{s}_t^{(l)}))_{i,\cdot} M 
   \in \operatorname{span}\{ e_j : j \in \mathcal{I}((J_{\hat{u}})_{i,\cdot}) \}.
\end{equation}
Taking linear combinations across $l \in \mathcal{N}_i$ preserves this property, so in particular,
\begin{equation}\label{eq:supp_belongs_inv}
    M_{j,\cdot} \in \operatorname{span}\{ e_k : k \in \mathcal{I}((J_{\hat{u}})_{i,\cdot}) \},
    \quad \forall j \in \mathcal{I}((J_u)_{i,\cdot}).
\end{equation}

Since $J_{\phi^{-1}}(\hat{\mathbf{s}}_t)$ is invertible, there exists a permutation $\pi$ such that
\begin{equation}
(J_{\phi^{-1}}(\hat{\mathbf{s}}_t))_{j,\pi(j)} \neq 0, 
\quad \forall j \in \{1,\dots,d_s\}.
\end{equation}
Because $\mathcal{I}(M) = \mathcal{I}(J_{\phi^{-1}}(\hat{\mathbf{s}}_t))$, we obtain
\begin{equation}
    M_{j,\pi(j)} \neq 0, 
    \quad \forall j \in \mathcal{I}((J_u)_{i,\cdot}).
\end{equation}
Combining this with \Eqref{eq:supp_belongs_inv}, we conclude
\begin{equation} \label{eq:after_perm_inv}
    \pi(j) \in \mathcal{I}((J_{\hat{u}})_{i,\cdot}), 
    \quad \forall j \in \mathcal{I}((J_u)_{i,\cdot}).
\end{equation}

Equation~\ref{eq:after_perm_inv} shows that each ground-truth relevant index $j \in \mathcal{I}((J_u)_{i,\cdot})$ is mapped to a distinct estimated relevant index $\pi(j) \in \mathcal{I}((J_{\hat{u}})_{i,\cdot})$.  
Therefore, the estimated index set must contain at least as many elements as the ground-truth one:
\begin{equation}
    \|\mathcal{I}((J_{\hat{u}})_{i,\cdot})\| 
    \;\geq\; \|\mathcal{I}((J_u)_{i,\cdot})\|.
\end{equation}
This completes the proof.
\end{proof}

\subsection{Proof of Theorem~\ref{thm:instantaneous}}\label{sec:proof-inst}

\instantaneous* 

\begin{proof}
The first part of the proof follows the similar strategy as Proposition~\ref{prop:inv}, and we provide the full details for completeness.  
Since $\mathbf{o}_t=f_t(\mathbf{s}_t)$ and $\mathbf{o}_t=\hat{f}_t(\hat{\mathbf{s}}_t)$ are observationally equivalent, there exists an invertible mapping $\phi$ such that
\begin{equation}
    \hat{\mathbf{s}}_t = \hat{f}_t^{-1}\circ f_t(\mathbf{s}_t) = \phi(\mathbf{s}_t),
\end{equation}
with inverse $\phi^{-1}$. By the chain rule,
\begin{equation} \label{eq:chain}
    J_{\hat{u}} =  J_u \, J_{\phi^{-1}}.
\end{equation}

For each $i \in [d_g]$, consider a set $\mathcal{N}_i$ of $\|\mathcal{I}((J_u)_{i,\cdot})\|$ distinct points and the corresponding Jacobians
\begin{equation} \label{eq:jacobians}
       \left( \frac{\partial u_i}{\partial \mathbf{s}_{t,1}}, \frac{\partial u_i}{\partial \mathbf{s}_{t,2}}, \ldots, \frac{\partial u_i}{\partial \mathbf{s}_{t,d_s}} \right) \bigg|_{\mathbf{s} = \mathbf{s}_t^{(l)}}, \quad l \in \mathcal{N}_i.
\end{equation}
By assumption, the vectors in \Eqref{eq:jacobians} are linearly independent.  

We now construct a matrix $M$. Because the row vectors in \Eqref{eq:jacobians} are linearly independent, any row $M_{j,\cdot}$ with $j \in \mathcal{I}((J_u)_{i,\cdot})$ can be expressed as a linear combination of them.  
That is, there exist coefficients $\{\beta_l\}_{l \in \mathcal{N}_i}$ such that
\begin{equation} \label{eq:span}
    M_{j,\cdot} = \sum_{l \in \mathcal{N}_i} \beta_l \, (J_u(\mathbf{s}_t^{(l)}))_{i,\cdot}\, M.
\end{equation}

We require $M$ to satisfy two conditions:  
(i) for each $i \in [d_g]$, the linear combination in \Eqref{eq:span} must lie in the span of the canonical basis vectors indexed by $\mathcal{I}((J_{\hat{u}})_{i,\cdot})$, i.e.,
\begin{equation} \label{eq:relation_new}
    \sum_{l \in \mathcal{N}_i} \beta_l \, (J_u(\mathbf{s}_t^{(l)}))_{i,\cdot}\, M \in \operatorname{span}\{ e_j : j \in \mathcal{I}((J_{\hat{u}})_{i,\cdot}) \},
\end{equation}
and (ii) its index set matches that of $J_{\phi^{-1}}(\hat{\mathbf{s}}_t)$:
\begin{equation} \label{eq:Mh_support}
    \mathcal{I}(M) = \mathcal{I}( J_{\phi^{-1}}(\hat{\mathbf{s}}_t)).
\end{equation}

By the index-set inclusion assumption, for all $l \in \mathcal{N}_i$ we have
\begin{equation}
    \mathcal{I}\big((J_u(\mathbf{s}_t^{(l)})\,M)_{i,\cdot}\big) \subseteq \mathcal{I}\big((J_{\hat{u}}(\hat{\mathbf{s}}_t^{(l)}))_{i,\cdot}\big).
\end{equation}
This guarantees
\begin{equation}
   (J_u(\mathbf{s}_t^{(l)}))_{i,\cdot}\, M \in \operatorname{span}\{ e_j : j \in \mathcal{I}((J_{\hat{u}})_{i,\cdot}) \}.
\end{equation}
Taking linear combinations with the coefficients $\{\beta_l\}$, we conclude
\begin{equation}
    \sum_{l \in \mathcal{N}_i} \beta_l \, (J_u(\mathbf{s}_t^{(l)}))_{i,\cdot}\, M \in \operatorname{span}\{ e_j : j \in \mathcal{I}((J_{\hat{u}})_{i,\cdot}) \}.
\end{equation}

Equivalently, for every $j \in \mathcal{I}((J_u)_{i,\cdot})$,
\begin{equation}\label{eq:supp_belongs}
     M_{j,\cdot} \in \operatorname{span}\{ e_k : k \in \mathcal{I}((J_{\hat{u}})_{i,\cdot}) \}.
\end{equation}

Since $J_{\phi^{-1}}(\hat{\mathbf{s}}_t)$ is invertible, its determinant is nonzero.  
Expanding the determinant as a sum over permutations, there must exist a permutation $\pi$ such that
\begin{equation}
(J_{\phi^{-1}}(\hat{\mathbf{s}}_t))_{j,\pi(j)} \neq 0, \quad \forall j \in \{1,\dots,d_s\}.
\end{equation}
This establishes a one-to-one correspondence between the indices of $\mathbf{s}_t$ and $\hat{\mathbf{s}}_t$ through $\pi$.  

In particular, for every $j \in \mathcal{I}((J_u)_{i,\cdot})$, we have
\begin{equation}
(J_{\phi^{-1}}(\hat{\mathbf{s}}_t))_{j,\pi(j)} \neq 0.
\end{equation}
Because $\mathcal{I}(M) = \mathcal{I}(J_{\phi^{-1}}(\hat{\mathbf{s}}_t))$, this implies
\begin{equation}
    M_{j,\pi(j)}\neq 0, \quad \forall j \in \mathcal{I}((J_u)_{i,\cdot}).
\end{equation}
Combining this with \Eqref{eq:supp_belongs}, it follows that
\begin{equation} \label{eq:after_perm}
    \pi(j) \in \mathcal{I}((J_{\hat{u}})_{i,\cdot}), \quad \forall j \in \mathcal{I}((J_u)_{i,\cdot}).
\end{equation}
Therefore, every nonzero entry of $J_u$ has a corresponding nonzero entry of $J_{\hat{u}}$ at the permuted column index:
\begin{equation} \label{eq:spar_imply}
    (J_u)_{i,j} \neq 0 \;\;\implies\;\; (J_{\hat{u}})_{i,\pi(j)} \neq 0.
\end{equation}
Finally, with the sparsity regularization $\|J_{\hat{u}}\|_0 \leq \|J_u\|_0$, this implication strengthens to an equivalence:
\begin{equation} \label{eq:spar_equiv}
    (J_u)_{i,j} \neq 0 \;\;\Longleftrightarrow\;\; (J_{\hat{u}})_{i,\pi(j)} \neq 0.
\end{equation}

For $c \in I_k$, we have $c \in \mathcal{I}((J_u)_{k,\cdot})$.  
Hence, by \Eqref{eq:supp_belongs}, 
\begin{align}
     M_{c,\cdot} &\in \operatorname{span}\{ e_{k'} : k' \in \mathcal{I}((J_{\hat{u}})_{k,\cdot}) \}. \label{eq:intersec_k_1}
\end{align}
Suppose, for contradiction, that $M_{c,\pi(r)} \neq 0$ for some $r \in I \setminus I_k$.  
Then $\pi(r)$ belongs to the index set on the right-hand side of \Eqref{eq:intersec_k_1}.

By \Eqref{eq:spar_equiv}, this implies that $r \in \mathcal{I}((J_u)_{k,\cdot})$, i.e. $r \in I_k$, contradicting $r \in I \setminus I_k$.  
Therefore, $M_{c,\pi(r)}=0$, which together with $\mathcal{I}(M)=\mathcal{I}(J_{\phi^{-1}}(\hat{\mathbf{s}}_t))$ yields
\begin{equation} \label{eq:dis_1}
    \frac{\partial \mathbf{s}_{t,c}}{\partial \hat{\mathbf{s}}_{t,\pi(r)}} = 0, 
    \quad \forall c \in I_k,\; r \in I \setminus I_k.
\end{equation}
Since $\phi$ is invertible, there exists an invertible mapping between $\mathbf{s}_{t,c}$ and $\hat{\mathbf{s}}_{t,\pi(c)}$, and $\mathbf{s}_{t,c}$ depends only on $\hat{\mathbf{s}}_{t,\pi(c)}$.  
Moreover, because $r \in I \setminus I_k$ and $c \in I_k$, $\mathbf{s}_{t,r}$ is independent of $\mathbf{s}_{t,c}$. Hence, $\mathbf{s}_{t,r}$ does not depend on $\hat{\mathbf{s}}_{t,\pi(c)}$, in the sense that their mutual information is zero. Thus, we further have
\begin{equation} \label{eq:dis_2}
    \frac{\partial \mathbf{s}_{t,r}}{\partial \hat{\mathbf{s}}_{t,\pi(c)}} = 0, \quad \forall c \in I_k,\; r \in I \setminus I_k.
\end{equation}
Given the invertibility of the mapping between $\mathbf{s}_t$ and $\hat{\mathbf{s}}_t$, the inverse of both Equations \ref{eq:dis_1} and \ref{eq:dis_2} also holds.  
Thus, the only dependencies remain within the estimated and ground-truth task-relevant parts, completing the proof.
\end{proof}

\section{Supplementary Discussions}
\label{appx:discuss}

\subsection{Further Comparison with Related Works}

\paragraph{Learning Temporal Task Structure.} For our temporal task structure results in Section \ref{sec:temporal-structure}, the most relevant prior work is \citet{qiu2024identifying}, which also models tasks as colliders and seeks to recover their structure in an unsupervised manner. However, the problem we considered is \textit{fundamentally different} as follows:

\begin{itemize}
    \item \textbf{Theoretical Foundation vs. Heuristic Decomposition.} The most essential distinction lies in identifiability. As noted in Section \ref{sec:temporal-structure}, SelTask relies on sequential non-negative matrix factorization, which is a purely heuristic decomposition without any identifiability guarantees for recovering the true temporal task structure. Identifiability is critical because it sets the ultimate limit of any model and provides the guarantee of recovering the ground-truth representation. Our work fills this theoretical gap by providing the first general nonparametric identifiability guarantee for this problem, ensuring the recovered structure is faithful to the underlying generative process.
    
    \item \textbf{More General Problem Setting.} In addition to providing theoretical guarantees, our approach generalizes the previous heuristic methods (including SelTask) in several crucial ways. First, our theory accommodates tasks that may appear, disappear, and interleave arbitrarily over time, moving beyond the assumption of sequential completion typical of decomposition-based methods like SelTask. Second, we do not require strict temporal dependence, and our framework accounts for sequences that may contain an arbitrary number of disconnected boundaries or even i.i.d. settings. SelTask does not support such temporal disconnections. Lastly, the data-generating process is fully nonparametric, allowing for complex nonlinear functions and arbitrary distributions, without relying on auxiliary information or distributional constraints.
\end{itemize}

\paragraph{Learning Task Relevant Representation.} Many previous works study the identification of latent variables that generate observational data, but they do not address our goal of recovering a task-relevant representation. On the technical side, some works also adopt a structural view of the hidden generative process. A key example is \citet{zhengidentifiability}, which establishes identifiability results for nonlinear ICA under structural conditions. To avoid confusion, we clarify the differences between their setting and ours as follows:

\begin{itemize}
    \item \textbf{Different settings.} \citet{zhengidentifiability} considers the identifiability of nonlinear ICA in the IID setting, while we consider the general temporal settings.
    
    \item \textbf{Different goals.} \citet{zhengidentifiability} aims to recover all individual latent variables, while we aim to recover the temporal task structure, and task-relevant variables as a group.
    
    \item \textbf{Different assumptions.} \citet{zhengidentifiability} assumes structural sparsity, i.e., a specific graphical criterion on the underlying structure between latent and observed variables; while we do not impose these constraints on the data generative process. Moreover, \citet{zhengidentifiability} assumes latent independence while we do not.

    \item \textbf{Different proof strategies.} Since the considered problems and techniques are fundamentally different, naturally, the proof strategies differ. If we treat task variables as observed variables, the proof ideas align up to the point where we recover the support of the Jacobian up to permutation. However, this yields $J_{\hat{u}}(\hat{s}) = D_1 J_u(s) D_2 P$, which actually says \textit{nothing} about the identifiability of latent variables. The latent components can still be mixed arbitrarily, including across the sets associated with different tasks, let alone within each set.

   Previous works relying on Structural Sparsity use that assumption to go further and achieve element-wise identifiability, obtaining $J_{\hat{u}}(\hat{s}) = J_u(s) D P$. Our setting does not require that level of identifiability. The central question for us is more modest but crucial: for two task variables $u_1$ and $u_2$, how do we ensure that estimated latents associated with $u_1$ do not mix with ground-truth latents associated with $u_2$? This separation cannot be guaranteed by any existing proof logic. Our result provides exactly that guarantee without imposing structural sparsity.

\end{itemize}

\subsection{Detailed Discussion on Main Conditions}

For identifying task-relevant representations, our main condition is that in Proposition \ref{prop:inv}. The assumptions are intended to ensure that the Jacobian carries enough variation to span its support, thereby capturing the underlying dependencies between latent state variables and task variables in the nonlinear setting. While these assumptions may appear technical at first, they are usually quite mild in practice.
The span condition requires that across a small number of samples, the Jacobian vectors for each task variable $u_i$ span the relevant support. Intuitively, this rules out degenerate situations where all data points come from an extremely narrow subpopulation that fails to exhibit the necessary variation. In typical settings with smooth mappings and a latent state distribution that has a continuous density, Jacobian evaluations at independently drawn samples form continuous random vectors that are in general position with probability one. This means that only $|\mathcal{I}(J_u(\mathbf{s}t){i,\cdot})|$ random samples are usually enough to span the support, and this number corresponds to how many latent state variables are relevant to $u_i$, which is usually much smaller than the full sample size. As a result, the condition is typically satisfied with very few datapoints.
The index set inclusion assumption is also mild. Since
\begin{equation}
J_u(\mathbf{s}_t) M'(\mathbf{s}_t,\hat{\mathbf{s}}_t) = J_{\hat u}(\hat{\mathbf{s}}_t),
\end{equation}
and $M$ shares the same nonzero index pattern as $M'$, the row $(J_u(\mathbf{s}_t) M)_{i,\cdot}$ already lies inside the support of $(J_{\hat u}(\hat{\mathbf{s}}_t))_{i,\cdot}$. While special value combinations could in principle cause the supports to differ at particular points, the condition only requires the existence of one such Jacobian in the relevant space, which is almost always satisfied in practice.

\section{Supplementary Experimental Setups}
\label{appx:exp_setup}


In this section, we include further details of the experimental setups not fully elaborated in the main text because of space constraints. In summary, our practical CI implementation is aligned with existing high-dimensional practice. When the variables are high-dimensional, it is routine to first map them into a lower-dimensional representation and then estimate conditional mutual information in that reduced space. This representation step is widely used in CMI based CI testing to avoid the curse of dimensionality. It is also common to approximate CMI with scalable variational bounds. Neural and variational CMI estimators \citep{belghazi2018mutual, molavipour2020conditional, mondal2020c} and contrastive objectives such as InfoNCE \citep{oord2018representation, sordoni2021decomposed} follow exactly this paradigm and have been used extensively in high dimensional dependence estimation. Our method adopts the same blueprint.

\paragraph{CMI Surrogate for the CI Test.} In high-dimensional settings, when direct conditional independence (CI) testing is computationally infeasible, it is standard to use approximations such as conditional mutual information (CMI). Below we provide additional details on this surrogate.

For each pair $(\mathbf{S}_k,\mathbf{S}_v)$ and task $\mathbf{g}_i$, we replace the CI test
\begin{equation}
H_0:\ \mathbf{s}_{kL}\perp \mathbf{s}_{vL}\ \big|\ \mathbf{Z}_{\mathrm{band}}(k,v,i),
\end{equation}
with an estimate of the conditional mutual information
\begin{equation}
I\big(\mathbf{s}_{kL}; \mathbf{s}_{vL}\mid \mathbf{Z}_{\mathrm{band}}(k,v,i)\big)
=\mathbb{E}\left[\log\frac{p(\mathbf{s}_{kL},\mathbf{s}_{vL}\mid \mathbf{Z}_{\mathrm{band}})}{p(\mathbf{s}_{kL}\mid \mathbf{Z}_{\mathrm{band}})\,p(\mathbf{s}_{vL}\mid \mathbf{Z}_{\mathrm{band}})}\right].
\end{equation}
Direct estimation with $\mathbf{Z}_{\mathrm{band}}$ can be high dimensional. We therefore learn a task-conditioned representation $\mathbf{c}_i=h_{\phi}\!\big(\mathbf{Z}_{\mathrm{band}}(k,v,i)\big)$ and instead test with $I(\mathbf{s}_{kL};\mathbf{s}_{vL}\mid \mathbf{c}_i)$. If $h_{\phi}$ is conditionally sufficient for $\mathbf{Z}_{\mathrm{band}}$ with respect to $\{\mathbf{s}_{kL},\mathbf{s}_{vL}\}$, then 
\[
I(\mathbf{s}_{kL};\mathbf{s}_{vL}\mid \mathbf{Z}_{\mathrm{band}})=I(\mathbf{s}_{kL};\mathbf{s}_{vL}\mid \mathbf{c}_i).
\]

We estimate a variational lower bound on $I(\mathbf{s}_{kL};\mathbf{s}_{vL}\mid \mathbf{c}_i)$ using a conditional InfoNCE objective. Let $f_{\theta}(\mathbf{s}_{kL},\mathbf{s}_{vL},\mathbf{c}_i)$ be a critic. For each positive pair $(\mathbf{s}_{kL},\mathbf{s}_{vL},\mathbf{c}_i)$, draw $K$ negatives $\{\tilde{\mathbf{s}}_{vL}^{(j)}\}_{j=1}^{K}$ by shuffling $\mathbf{s}_{vL}$ within mini-batches that share $\mathbf{c}_i$ (or within nearest neighbors of $\mathbf{c}_i$). Optimize
\begin{equation}
\mathcal{L}_{\mathrm{cNCE}}(\theta,\phi)=\mathbb{E}\left[
\log\frac{\exp f_{\theta}(\mathbf{s}_{kL},\mathbf{s}_{vL},\mathbf{c}_i)}
{\exp f_{\theta}(\mathbf{s}_{kL},\mathbf{s}_{vL},\mathbf{c}_i)+\sum_{j=1}^{K}\exp f_{\theta}(\mathbf{s}_{kL},\tilde{\mathbf{s}}_{vL}^{(j)},\mathbf{c}_i)}
\right],
\end{equation}
which lower bounds $I(\mathbf{s}_{kL};\mathbf{s}_{vL}\mid \mathbf{c}_i)$ up to a constant. After training a single task-conditioned critic across all $(k,v,i)$, define
\begin{equation}
\widehat{I}_{\mathrm{cNCE}}(k,v,i)=\mathbb{E}\left[
\log\frac{\exp f_{\theta}(\mathbf{s}_{kL},\mathbf{s}_{vL},\mathbf{c}_i)}
{\frac{1}{K}\sum_{j=1}^{K}\exp f_{\theta}(\mathbf{s}_{kL},\tilde{\mathbf{s}}_{vL}^{(j)},\mathbf{c}_i)}
\right].
\end{equation}
We reject $H_0$ for $(k,v,i)$ if $\widehat{I}_{\mathrm{cNCE}}(k,v,i)$ exceeds a permutation threshold obtained by re-sampling $\{\tilde{\mathbf{s}}_{vL}^{(j)}\}$ within $\mathbf{c}_i$ buckets.

\paragraph{Additional Details of SportsHHI.}
SportsHHI contains $11,398$ video sequences, partitioned into short clips of $5$ frames each, with $55,631$ annotated pairwise interaction instances. The HHID task labels the interaction for each pair of human actors in a video; interactions often occupy short temporal windows embedded in long sequences, so a single sequence typically contains multiple, possibly overlapping interactions. This results in complex temporal patterns, with flexible interactions across multiple actors. 

In our implementation of Algorithm~\ref{alg:pairwise-ci-final} on SportsHHI, we set the number of latent state variables to be the same as the number of humans at frame $t$. For all baselines, we use a pretrained CLIP encoder \citep{radford2021learning} with a ResNet-50 backbone to get the observed RGB features $\mathbf{o}$. To handle temporal dynamics, an MLP parameterizes transitions $\mathbf{s}_{t-1} \rightarrow \mathbf{s}_t$, while conditional mutual information (CMI) is estimated on latent trajectories as a surrogate for conditional independence testing. To ensure fairness, all baselines employ a ResNet-50 backbone for RGB feature extraction, consistent with prior work.

\paragraph{Downstream Benefit.} We evaluate our method on the Meta-World benchmark \citep{yu2020meta} by constructing an interleaved offline dataset from the \textit{door-open/close; drawer-open} tasks. Both tasks involve a 7-DoF robotic arm manipulating the same door but with opposite goals, making them an ideal testbed for multi-task interference. We first train task-specific expert policies using SAC until reaching $60\%$ success rate, then collect $\sim$300 successful and $\sim$300 mixed-quality trajectories for each task. To create interleaved data, we segment trajectories into 30–60 step skill chunks (e.g., reaching, grasping, rotating). With probability $p=0.8$, we randomly splice open- and close-task segments into a single trajectory, inserting short transition phases to ensure physical continuity. This results in $\sim$2.4k interleaved trajectories, with on average $2.1$ task switches per trajectory. We provide only weak or noisy task labels derived from the door angle change, simulating realistic partially labeled data. We build upon the Active Fine-Tuning (AMF) framework~\citep{bagatella2025active}. Specifically, the agent learns a policy over identified tasks using their representation $\rvg$, which replaces the task embedding $\mu_c$ in AMF. This enables the agent to actively select tasks that improve generalization. To evaluate this, we train on three tasks—\textit{door-open}, \textit{door-close}, and \textit{drawer-open}, and test generalization to the new task \textit{drawer-close} with only $10^4$ samples.

\section{Supplementary Experimental Results}
\label{appx:exp_result}


\begin{table}[t]
    \centering
    \caption{Runtime (expected value) under varying number of time steps.}
    \label{tab:runtime}
    \vspace{-0.5em}
    \begin{tabular}{lccccccc}
        \toprule
        $T$
        & 8 & 10 & 12 & 14 & 16 & 18 & 20 \\
        \midrule
        Ours
        & 0.01 & 0.01 & 0.01 & 0.01 & 0.01 & 0.01 & 0.02 \\
        CCA
        & 0.01 & 0.01 & 0.02 & 0.02 & 0.02 & 0.03 & 0.04 \\
        Group Lasso
        & 11.2 & 26.8 & 34.8 & 36.3 & 58.3 & 59.7 & 82.3 \\
        SelTask
        & 0.69 & 1.12 & 1.40 & 1.33 & 1.82 & 2.11 & 2.40 \\
        \bottomrule
    \end{tabular}
\end{table}

\paragraph{Runtime Analysis.}
we have conducted additional runtime analysis on seven datasets with four methods. To test our algorithm in the most computationally-heavy case, we set the segment lengths to the minimal value of $2$. Following the same setting in the manuscript, we vary the number of time steps from $8$ to $20$, and set the number of tasks $M = T/5$. The results (seconds) are as Table \ref{tab:runtime}.

\begin{wraptable}{r}{0.3\textwidth}
\centering
\vspace{-1em}
\caption{Additional results on SportsHHI}
\vspace{-0.3em}
\label{tab:map_results}
\begin{tabular}{l c}
\hline
Method & mAP \\
\hline
Ours & 0.25 $\pm$ 0.08 \\
Leap & 0.12 $\pm$ 0.05 \\
Base & 0.09 $\pm$ 0.01 \\
Slowfast & 0.11 $\pm$ 0.02 \\
VitB & 0.12 $\pm$ 0.03 \\
\hline
\end{tabular}
\vspace{-0.3em}
\end{wraptable}

\paragraph{Additional Results on Learning Temporal Task Structure.}
To further study the ability of different models to recover temporal task structure, we evaluated several additional approaches on the task structure prediction benchmark. The goal is to compare with more standard video models that do not target identifiability. We further included two new video models, Slowfast \citep{feichtenhofer2019slowfast} and VitB \citep{tong2022videomae}. As shown in Table \ref{tab:map_results}, our method achieves the best performance, which further validates that a principled structure learning approach yields the most reliable recovery of temporal task structure. Leap outperforms the Base model, which highlights the benefit of identifiable representations for structure learning. However, when observations are modeled more appropriately, this advantage becomes less pronounced, as seen in the similar performance of Slowfast and VitB.

\paragraph{Additional Results on Controllable Generation.}

\begin{figure}[t]
    \centering
   \begin{tabular}{cc}
   With sparsity & Without sparsity\\
      \includegraphics[scale=0.17]{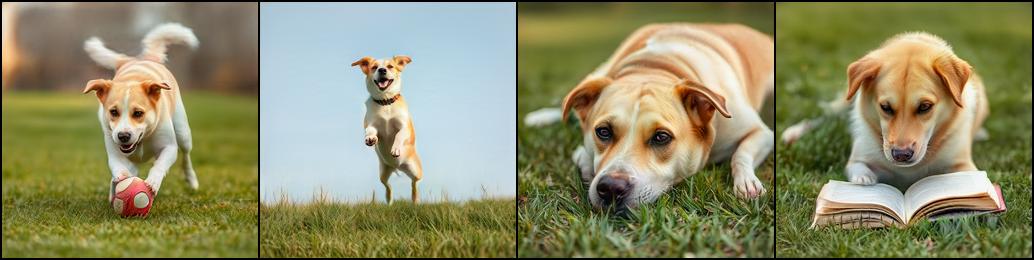}  & 
      \includegraphics[scale=0.17]{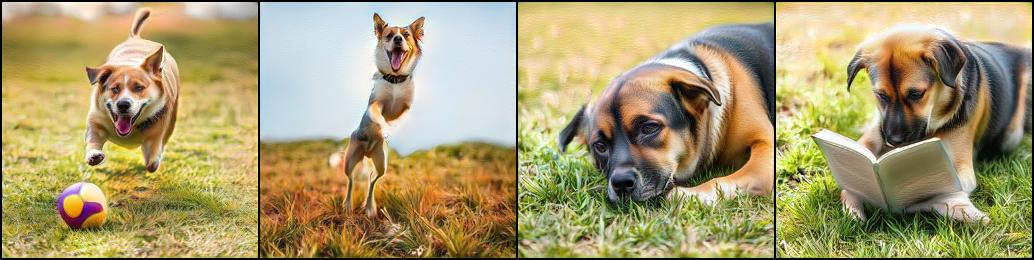}
      \\
       \includegraphics[scale=0.17]{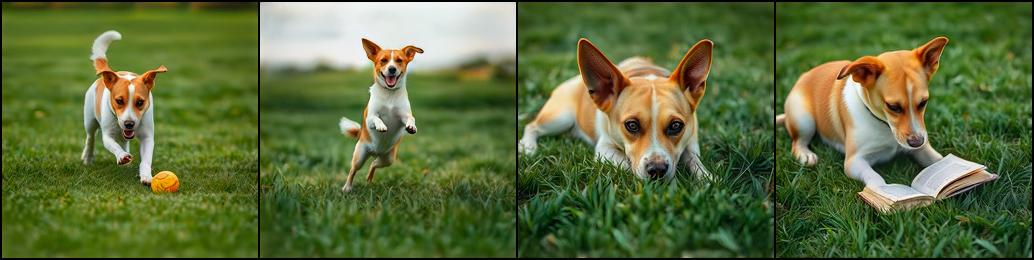}  & 
      \includegraphics[scale=0.17]{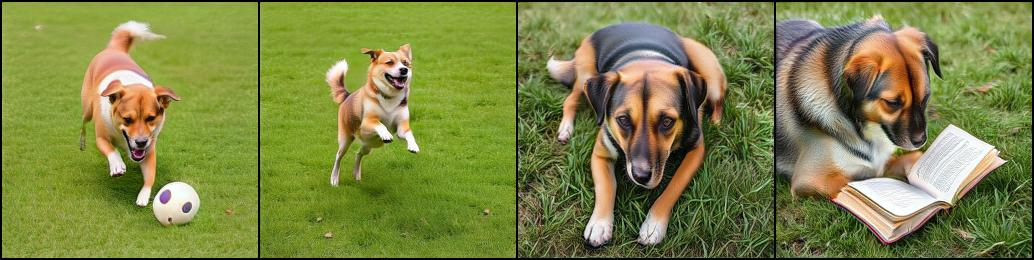} \\
        \includegraphics[scale=0.17]{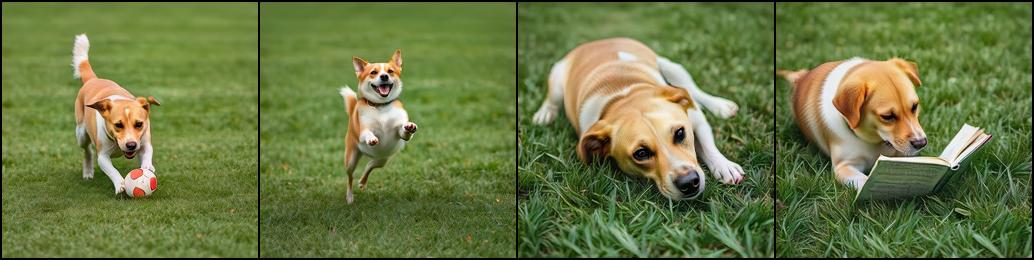}  & 
      \includegraphics[scale=0.17]{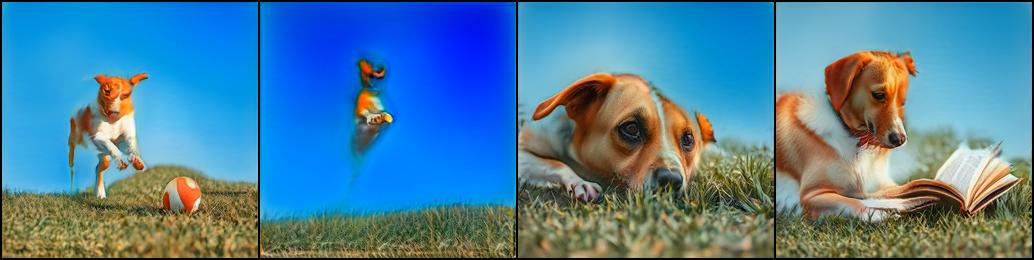} \\
   \end{tabular}
    \caption{Comparison of controllable generation for a task of “a dog playing ball, jumping high, lying on the grass, and reading a book". With sparsity, the model learns task-relevant latent representations and modifies only the intended concepts for each instruction. Without sparsity, the representation entangles irrelevant factors, causing unintended changes and reduced precision.}
    \vspace{-1em}
    \label{fig:dog}
\end{figure}

Moreover, we conduct additional experiments to evaluate the benefit of recovering task relevant representations for controllable generation. We consider the task of "a dog playing ball, jumping high, lying on the grass, and reading a book". The empirical setup follows that of Figure \ref{fig:cat}. From Figure \ref{fig:dog}, it becomes evident that precise control requires learning task-relevant representations with sparsity regularization. Without it, the learned representation absorbs irrelevant hidden factors and introduces unwanted changes in the generated outputs. For instance, the dog may change into a different one, its shape may become unnatural, and sometimes irrelevant backgrounds dominate the generation.



\begin{figure*}
    \centering

    \includegraphics[width=0.9\linewidth]{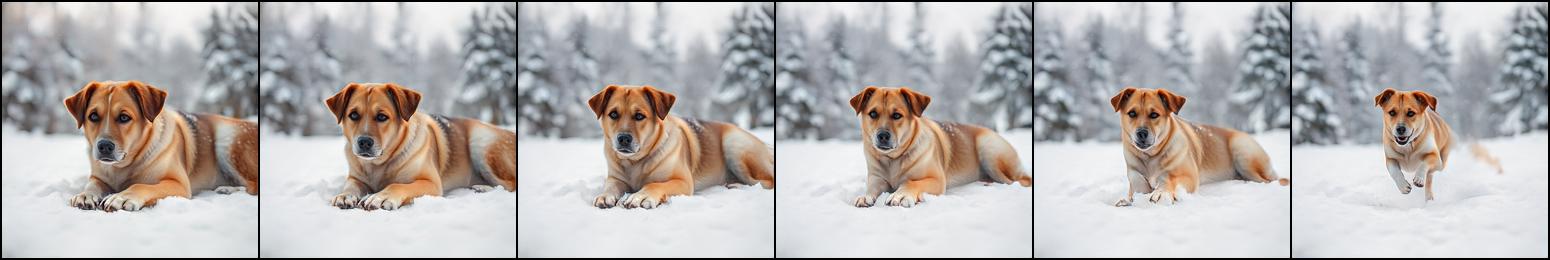}\\
    \includegraphics[width=0.9\linewidth]{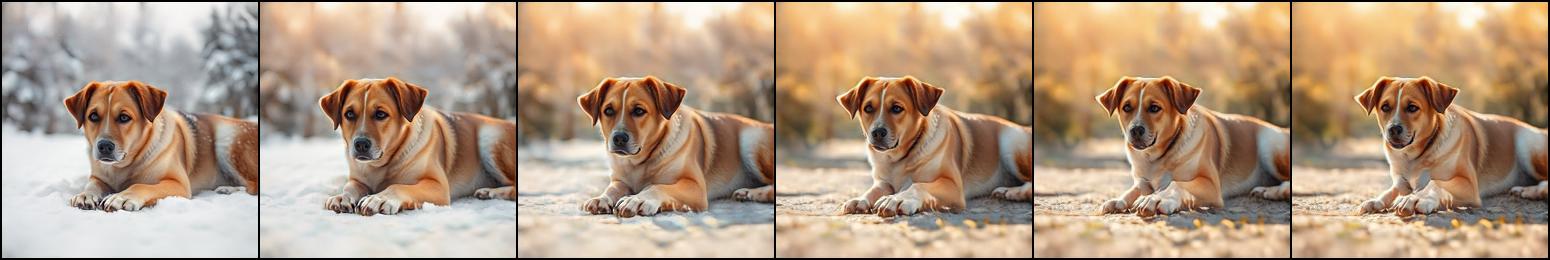}\\
    \vspace{1em}  
    \includegraphics[width=0.9\linewidth]{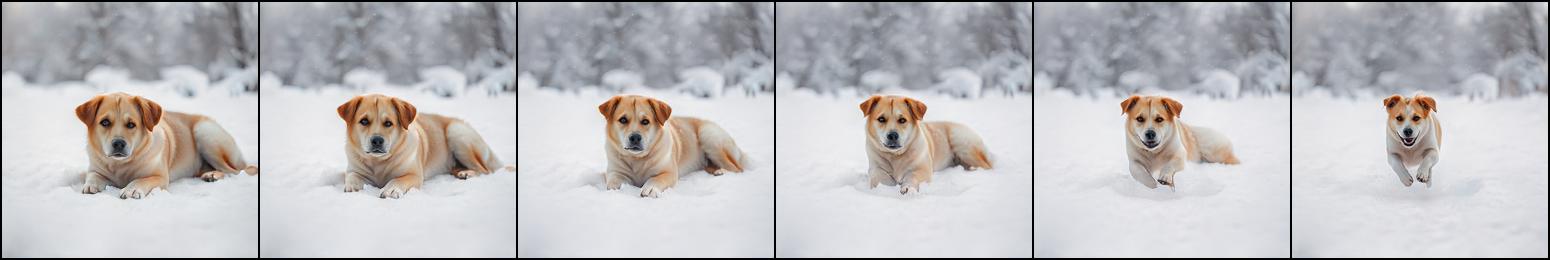}\\
    \includegraphics[width=0.9\linewidth]{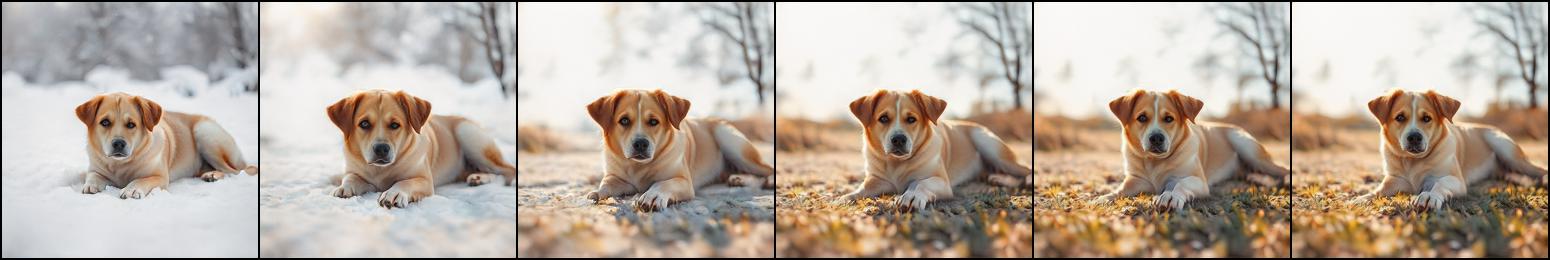}\\
    \caption*{Identified latents corresponding to “running" and “season" are successfully identified. Even in complex settings where attributes are not clearly separable visually, the method recovers meaningful representations. This also shows how the identified latents vary across tasks in an interpretable way.}
    \label{fig:placeholder}
\end{figure*}

\end{document}